\documentclass[10pt,letterpaper]{paper}

\usepackage[all]{hypcap}

\usepackage[authoryear, round]{natbib}

\usepackage[utf8]{inputenc} 
\usepackage[T1]{fontenc}    
\usepackage{url}            
\usepackage{booktabs}       
\usepackage{amsfonts}       
\usepackage{nicefrac}       
\usepackage{microtype}      
\usepackage{multirow} 
\usepackage[textsize=tiny]{todonotes}
\usepackage{algorithm}
\usepackage{amssymb}
\usepackage{cleveref}
\usepackage{dsfont}
\usepackage{nicefrac}
\usepackage{inconsolata}
\usepackage{xcolor}
\usepackage{amsmath}
\usepackage{amssymb}
\usepackage{booktabs}
\usepackage{multirow}
\usepackage{makecell}
\usepackage{caption}
\usepackage{subcaption}
\usepackage{bbm}
\usepackage{wrapfig}
\usepackage{mathtools,xparse}
\usepackage{pifont}
\usepackage{enumitem}
\usepackage{array}
\usepackage{scalerel,xparse}
\usepackage{colortbl}
\usepackage{tablefootnote}
\usepackage{tocloft}  
\usepackage{siunitx}

\usepackage{hyperref}  
\newcommand{\eqautoref}[1]{\hyperref[#1]{Eq.~\ref*{#1}}}
\newcommand{\eqsautoref}[2]{\hyperref[#1]{Eqs.~\ref*{#1}} and~\hyperref[#2]{\ref*{#2}}}
\newcommand{\secautoref}[1]{\hyperref[#1]{Section~\ref*{#1}}}
\newcommand{\appautoref}[1]{\hyperref[#1]{Appendix~\ref*{#1}}}
\newcommand{\figautoref}[1]{\hyperref[#1]{Figure~\ref*{#1}}}
\newcommand{\figsautoref}[2]{\hyperref[#1]{Figures~\ref*{#1}} and~\hyperref[#2]{\ref*{#2}}}

\usepackage{dblfloatfix} 
\usepackage{adjustbox}
\usepackage{algpseudocode}
\usepackage{setspace}
\usepackage[most,skins,theorems]{tcolorbox}
\usepackage{graphicx}

\newboolean{showsection}
\setboolean{showsection}{true}
\makeatletter
\@namedef{ver@everyshi.sty}{}
\makeatother

\definecolor{custom_green}{rgb}{0.0, 0.5, 0.0}
\definecolor{custom_red}{rgb}{1.0, 0.01, 0.24}
\definecolor{custom_blue}{HTML}{C9DAF7}
\definecolor{custom_purple}{HTML}{D9D1E9}
\definecolor{title_blue}{HTML}{204899} 
\definecolor{cite_blue}{HTML}{044dc1}  
\definecolor{cite_purple}{HTML}{7406a7}  
\definecolor{microsoft_red}{HTML}{ec4e21}
\definecolor{qualitativeTokenInk}{HTML}{17212B}
\definecolor{qualitativeAlignmentRed}{HTML}{CE4942}
\definecolor{qualitativeStudentBlue}{HTML}{A9CFEA}
\definecolor{qualitativeShiftGold}{HTML}{F7D58A}
\newcommand{\opposedtoken}[1]{\begingroup\setlength{\fboxsep}{0.7pt}\setlength{\fboxrule}{1.0pt}\smash{\fcolorbox{black}{qualitativeAlignmentRed}{\textcolor{white}{\texttt{\textbf{#1}}}}}\endgroup}
\DeclareRobustCommand{\captionopposedtoken}[1]{\begingroup\setlength{\fboxsep}{1.0pt}\setlength{\fboxrule}{1.0pt}\smash{\fcolorbox{black}{qualitativeAlignmentRed}{\textcolor{white}{\texttt{\textbf{#1}}}}}\endgroup}
\newcommand{\studenttoken}[1]{\begingroup\setlength{\fboxsep}{1.3pt}\setlength{\fboxrule}{0.4pt}\smash{\fcolorbox{qualitativeTokenInk}{qualitativeStudentBlue}{\textcolor{qualitativeTokenInk}{\texttt{\textbf{#1}}}}}\endgroup}
\newcommand{\shifttoken}[1]{\begingroup\setlength{\fboxsep}{1.3pt}\setlength{\fboxrule}{0.4pt}\smash{\fcolorbox{qualitativeTokenInk}{qualitativeShiftGold}{\textcolor{qualitativeTokenInk}{\texttt{\textbf{#1}}}}}\endgroup}

\hypersetup{
    colorlinks = true,
    citecolor = {cite_blue},
    linkcolor = {cite_purple},
    urlcolor = {cite_purple},
}

\definecolor{blanchedalmond}{rgb}{1.0, 0.92, 0.8}
\definecolor{carmine}{rgb}{0.59, 0.0, 0.09}
\definecolor{lightblue}{rgb}{0.22,0.45,0.70}%

\newtheorem{theorem}{Theorem}[section]

\newtheorem{proposition}[theorem]{Proposition}

\renewcommand{\mathbf}{\boldsymbol}

\makeatletter
\def\Ddots{\mathinner{\mkern1mu\raise\p@
\vbox{\kern7\p@\hbox{.}}\mkern2mu
\raise4\p@\hbox{.}\mkern2mu\raise7\p@\hbox{.}\mkern1mu}}
\makeatother

\numberwithin{equation}{section}

\definecolor{amaranth}{rgb}{0.9, 0.17, 0.31}
\definecolor{antiquebrass}{rgb}{0.8, 0.58, 0.46}
\definecolor{antiquefuchsia}{rgb}{0.57, 0.36, 0.51}
\definecolor{chromeyellow}{rgb}{0.31, 0.47, 0.26}

\newcommand{\1}{\mathds 1}

\usepackage{amsmath,amsfonts,bm}

\def\eqref#1{equation~\ref{#1}}

\def\1{\bm{1}}

\DeclareMathAlphabet{\mathsfit}{\encodingdefault}{\sfdefault}{m}{sl}
\SetMathAlphabet{\mathsfit}{bold}{\encodingdefault}{\sfdefault}{bx}{n}

\tcbset{
    positive/.style={
        colback=custom_blue!5!white, 
        colframe=custom_blue!60!black, 
        coltitle=custom_blue!50!black, 
        colbacktitle=custom_blue!20!white, 
        fonttitle=\bfseries\centering, 
        title=Positive Steering,
        rounded corners,
        boxrule=0.5mm,
        enhanced,
        before skip=5mm, 
        after skip=5mm,  
    },
    neutral/.style={
        colback=gray!5!white, 
        colframe=black!50!white, 
        coltitle=black!50, 
        colbacktitle=black!10!white, 
        fonttitle=\bfseries\centering, 
        title=Neutral Response,
        rounded corners,
        boxrule=0.5mm,
        enhanced,
        before skip=5mm,
        after skip=5mm,
    },
    negative/.style={
        colback=custom_orange!5!white, 
        colframe=custom_orange!60!black, 
        coltitle=custom_orange!50!black, 
        colbacktitle=custom_orange!20!white, 
        fonttitle=\bfseries\centering, 
        title=Negative Steering,
        rounded corners,
        boxrule=0.5mm,
        enhanced,
        before skip=5mm,
        after skip=5mm,
    },
    dataexamplebox/.style={
        colframe=ccbgc_2,
        colback=ccbgc,
        coltitle=black,
        fonttitle=\small\bfseries,
        colbacktitle=ccbgc_2,
        title=Example: Myopic Alice/Bob,
        rounded corners,
        boxrule=0.5mm,
        arc=1mm,
        width=\columnwidth,
        top=1mm,
        bottom=1mm,
        left=1mm,
        right=1mm,
        fontupper=\footnotesize
    }
}

\newcommand\pythonstyle{\lstset{
basicstyle=\ttfamily\footnotesize,
language=Python,
morekeywords={self, clip, exp, mse_loss, uniform_sample, concatenate, logsumexp},              %
keywordstyle=\color{deepblue},
emph={MyClass,__init__},          %
emphstyle=\color{deepred},    %
stringstyle=\color{deepgreen},
frame=single,                         %
showstringspaces=false
}}

\lstnewenvironment{python}[1][]
{
\pythonstyle
\lstset{#1}
}
{}

\newcommand\pythoninline[1]{{\pythonstyle\lstinline!#1!}}

\makeatletter
\def\mathcolor#1#{\@mathcolor{#1}}
\def\@mathcolor#1#2#3{%
  \protect\leavevmode
  \begingroup
    \color#1{#2}#3%
  \endgroup
}
\makeatother

\tcbset{
  aibox/.style={
    width=474.18663pt,
    top=7pt,
    bottom=5pt,
    colback=blue!6!white,
    colframe=black,
    colbacktitle=black,
    enhanced,
    center,
    attach boxed title to top left={yshift=-0.1in,xshift=0.15in},
    boxed title style={boxrule=0pt,colframe=white,},
  }
}
\newtcolorbox{AIbox}[2][]{aibox,title=#2,#1}

\Crefformat{equation}{#2Eq.\;(#1)#3}

\Crefformat{figure}{#2Figure #1#3}
\Crefformat{assumption}{#2Assumption #1#3}
\Crefname{assumption}{Assumption}{Assumptions}

\usepackage{crossreftools}
\makeatletter
\renewcommand\footnoterule{%
  \kern 15\p@
  \hrule \@width 2in \kern 2.6\p@ 
  \vspace{4pt}
}
\makeatother

\definecolor{oprdBoxBg}{RGB}{247,249,252}
\definecolor{oprdBoxFrame}{RGB}{203,213,225}

\newtcolorbox{oprdmethodbox}{
    colback=oprdBoxBg,
    colframe=oprdBoxFrame,
    boxrule=0.45pt,
    arc=2pt,
    boxsep=0pt,
    left=6pt,
    right=6pt,
    top=3pt,
    bottom=8pt,
    before skip=4pt,
    after skip=4pt,
    before upper={
        \setlength{\abovedisplayskip}{4pt}
        \setlength{\belowdisplayskip}{4pt}
        \setlength{\abovedisplayshortskip}{4pt}
        \setlength{\belowdisplayshortskip}{4pt}
    }
}

\newtcolorbox{oprdmethodbox2}{
    colback=oprdBoxBg,
    colframe=oprdBoxFrame,
    boxrule=0.45pt,
    arc=2pt,
    boxsep=0pt,
    left=6pt,
    right=6pt,
    top=5pt,
    bottom=3pt,
    before skip=4pt,
    after skip=4pt,
    before upper={
        \setlength{\abovedisplayskip}{4pt}
        \setlength{\belowdisplayskip}{4pt}
        \setlength{\abovedisplayshortskip}{4pt}
        \setlength{\belowdisplayshortskip}{4pt}
    }
}

\definecolor{ampbg}{RGB}{255,235,210}
\definecolor{ampfg}{RGB}{190,105,35}

\definecolor{unchbg}{RGB}{220,238,255}
\definecolor{unchfg}{RGB}{55,115,175}

\reportnumber{} %

\title{
Eliciting Weak-to-Strong Generalization with On-Policy Reverse Distillation
}

\author[1,*]{Youngrok Park}
\author[1,*,\textdagger]{Sangmin Bae}
\author[1]{Hojung Jung}
\author[2]{Jongwoo Ko}
\author[3]{Yunseon Choi}
\author[2]{\authorcr Young Jin Kim}
\author[2]{Pashmina Cameron}
\author[4,5,6]{Aaron Courville}
\author[1]{Se-Young Yun}

\makeatletter
\renewcommand{\AB@affilsepx}{,\protect\hspace{0.5em}\protect\Affilfont}
\makeatother

\affil[1]{KAIST AI}
\affil[2]{Microsoft}
\affil[3]{University of Toronto}
\affil[4]{Mila}
\affil[5]{Université de Montréal}
\affil[6]{CIFAR AI Chair}

\correspondingauthor{%
Correspondence to:
{
\email{\{yr-park,\,bsmn0223,\,yunseyoung\}@kaist.ac.kr}}.%
}

\begin{abstract}

\textbf{Abstract:}
Weak-to-strong generalization asks whether stronger models can learn from weaker supervisors and surpass them. This question is particularly important for successive model generations and multi-domain consolidation, where repeating frontier-scale post-training from scratch can be prohibitively expensive. Yet conventional distillation treats the weak teacher as an optimization target, potentially imposing its capacity ceiling on the student.
We introduce On-Policy Reverse Distillation (OPRD), which evaluates the teacher’s policy shift relative to its reference policy on student rollouts and amplifies the component of the student’s verifier-driven policy gradient along that direction. By rescaling only verifier-supported updates, OPRD preserves the stationary points of policy optimization while accelerating learning beyond the teacher.
In both successive model transfer and multi-teacher distillation, OPRD achieves higher performance with fewer student updates than existing RL and distillation approaches. Response-style analysis shows that OPRD students remain closer to models trained with verifier-based RL alone than to their weak teachers, suggesting that teacher guidance accelerates rather than redirects the student’s own optimization. Results in conventional strong-to-weak distillation further demonstrate that OPRD effectively combines verifier-driven policy optimization with teacher guidance regardless of capacity ordering.
\end{abstract}

\begin{document}

\maketitle
  
\section{Introduction}
\label{sec:intro}

\begin{figure*}[!h]
\centering

\begingroup

\begin{minipage}[t]{0.435\linewidth}
\centering

\vspace{0pt}
\makebox[\linewidth][c]{%
  \hspace*{6pt}%
  \includegraphics[
    width=0.95\linewidth
]{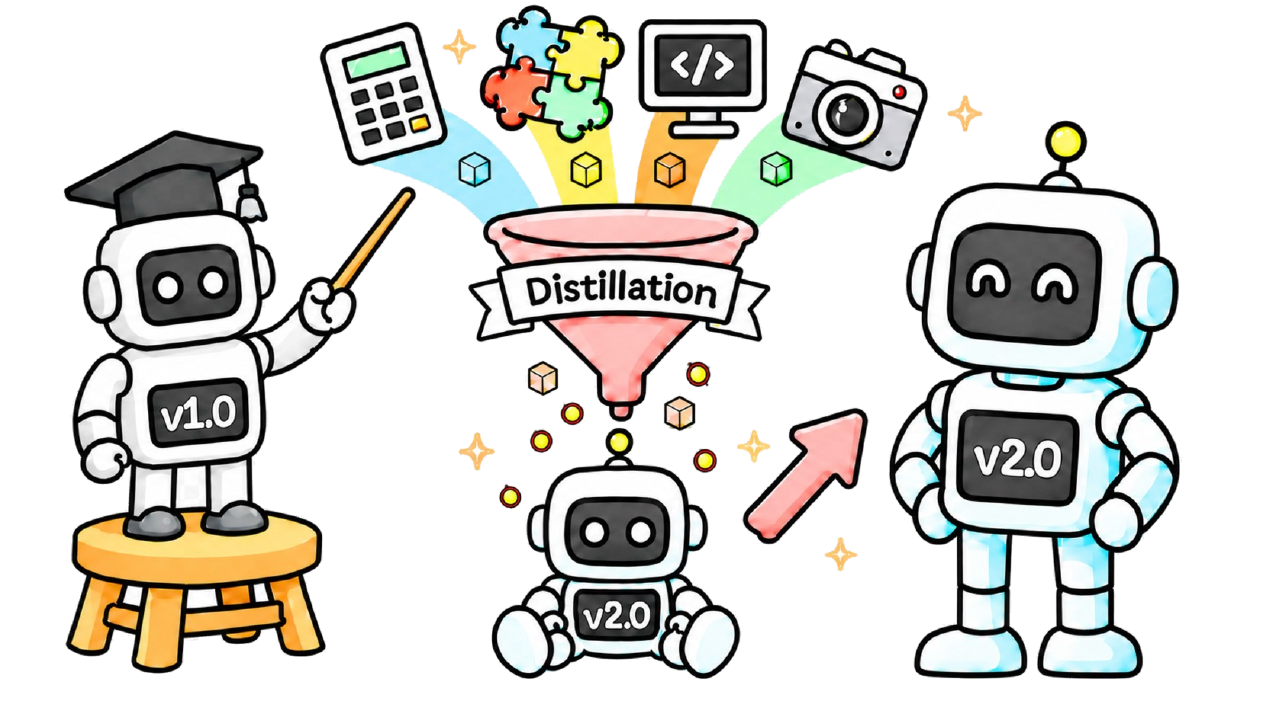}
  \hspace*{-6pt}%
}

\vspace{5pt}
\includegraphics[
  width=0.95\linewidth
]{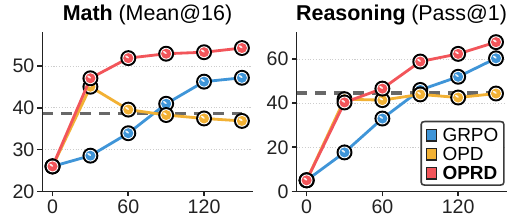}

\end{minipage}
\hfill%
\begin{minipage}[t]{0.555\linewidth}
\vspace{-2pt}
\centering

\makebox[\linewidth][c]{%
  \hspace*{6pt}%
  \includegraphics[
    width=0.88\linewidth
  ]{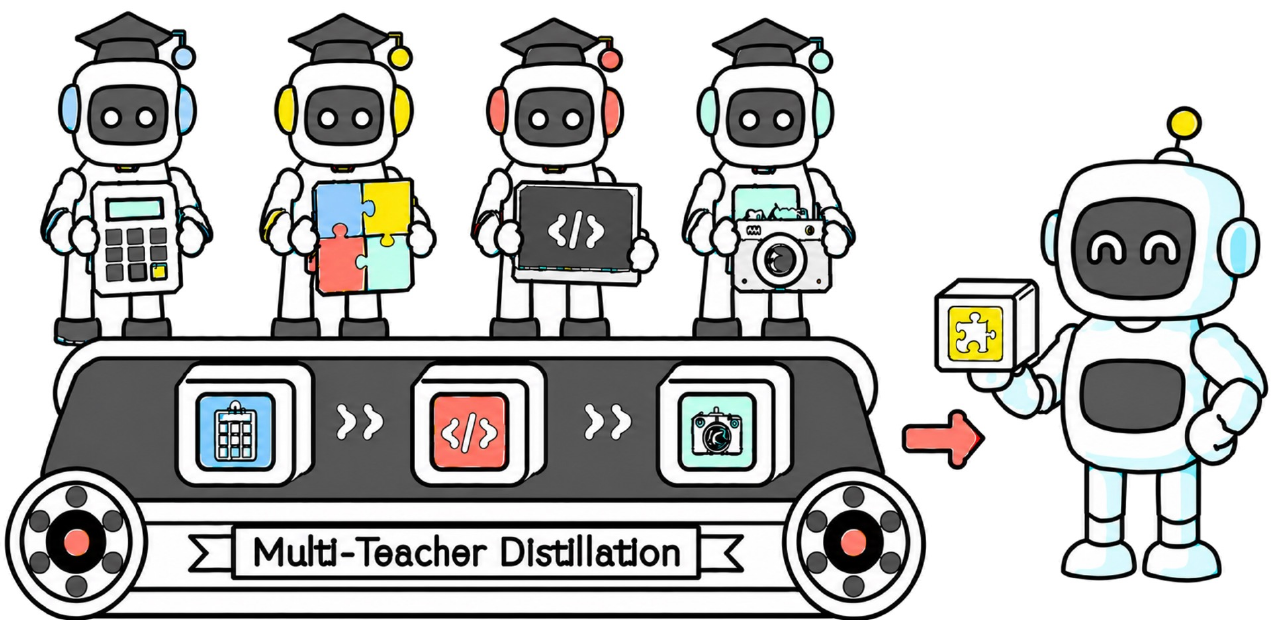}%
  \hspace*{-6pt}%
}

\vspace{1.2pt}

\includegraphics[
  width=1.05\linewidth
]{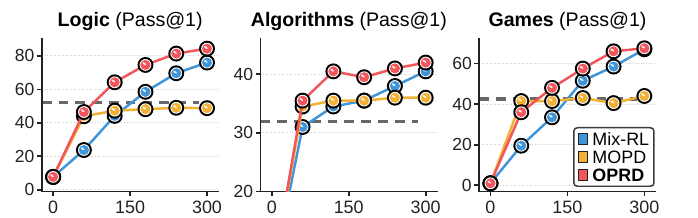}

\end{minipage}

\caption{
\textbf{On-policy reverse distillation (OPRD) enables faster and stronger weak-to-strong generalization across two key settings.} \textbf{({Left})} For successive model transfer, a checkpoint from a post-trained 4B-scale model serves as the teacher for an 8B-scale student. We average evaluations conducted every 30 training steps: Mean@16 over AIME\textquotesingle24, AIME\textquotesingle25, HMMT\textquotesingle25, and OlympiadBench for math, and Pass@1 over Knights \& Knaves, Quantum Lock, String Manipulation, and Countdown for reasoning tasks. \textbf{({Right})} In multi-domain consolidation, four domain-specialized 4B-scale teachers are distilled into a single 8B-scale student. Training examples are randomly mixed within each batch, with the corresponding domain teacher activated for each example. We report performance every 60 steps for Logic (averaged over Knights \& Knaves and Quantum Lock), Algorithms (String Manipulation), and Games (Countdown). The gray dashed lines denote the performance of the corresponding weak teachers.
\looseness=-1
}
\label{fig:overview}

\endgroup
\end{figure*}

\clearpage
Knowledge distillation (KD; \citealp{hinton2015distilling}) transfers knowledge from a teacher model to a student. For autoregressive language models, conventional distillation on fixed or teacher-generated sequences can create a mismatch between the prefixes seen during training and those visited by the student at inference time. On-policy distillation (OPD) \citep{gu2024minillm, agarwal2024policy, ko2024distillm} addresses this mismatch by training on student-generated responses and querying the teacher at the prefixes the student visits. Recent work has applied OPD to efficient reasoning post-training \citep{yang2025qwen3, xu2025kdrl, zeng2026glm} and to consolidating capabilities from multiple domain-specific teachers into a single student \citep{xiao2026mimo, yang2026nemotron, xu2026deepseek}. Standard OPD optimizes the student toward the teacher policy, making it well suited when matching that policy is the goal.
\looseness=-1

However, useful supervision need not come from a model that the student should ultimately match. Weak-to-strong generalization has shown that stronger pretrained models can learn from weaker supervisors and even outperform them across language understanding, reward modeling, and reasoning tasks \citep{burns2024weaktostrong, yang2024weak, lang2024theoretical, zhou2025weak}.
This regime is especially promising in two key settings in modern foundation model development.
(i) \emph{Successive model transfer} (\autoref{fig:overview}, top-left):
A post-trained model from one generation can supervise a larger-scale successor, enabling it to inherit prior post-training gains and improve beyond its supervisor.
(ii) \emph{Multi-domain consolidation} (\autoref{fig:overview}, top-right):
Domain-specialized policies can be developed independently at smaller scale, enabling efficient iteration on reward functions, environments, and training recipes. Multi-teacher on-policy distillation (MOPD) \citep{team2026kimi, ma2026mopd, xiao2026mimo} can then consolidate their capabilities into a unified foundation model.
Both settings therefore call for \emph{reverse} distillation that transfers post-training gains from weaker models without limiting the eventual performance of higher-capacity students.
\looseness=-1

Simply applying OPD in the weak-to-strong direction does not resolve this problem. A weak teacher’s final policy combines changes learned during post-training, preferences inherited from its reference policy, and behavior shaped by its limited capacity. Standard OPD matches this entire distribution, transferring all three and retaining the weak policy as the target at each student-visited prefix. Teacher matching can provide useful guidance when the student underperforms the teacher, but can also suppress surprising student behavior when the teacher favors a different solution \citep{akhondzadeh2026reward, ziheng2026less}.
Adding reinforcement learning does not remove this tension if teacher matching remains a separate objective, since the matching loss can compete with reward maximization \citep{xu2025kdrl, zhang2026reinforcement}. Likewise, isolating the teacher’s post-training policy change is insufficient if the student is still trained to match it. This change captures only the improvements realized by the weak teacher, not the full range available to the stronger student, so direct matching can impose the same capacity limitation.
The central question is therefore \textbf{how to exploit weak-model post-training gains without making either the weak policy or its policy change an independent optimization target.}
\looseness=-1

We introduce \textbf{On-Policy Reverse Distillation (OPRD)}, which uses the policy change learned during weak-model post-training to accelerate a stronger student’s own optimization. On the student’s on-policy rollouts, OPRD computes the verifier-driven policy gradient and extracts the weak teacher’s policy shift relative to its reference policy. It projects the student gradient onto the direction of this shift and \emph{amplifies the projected component}, leaving the orthogonal component unchanged.
Because this transformation positively rescales only a component already present in the student gradient, it preserves the stationary points of policy optimization in logit space while adding a nonnegative first-order alignment gain. When the teacher shift and student gradient align, OPRD reinforces their shared direction and accelerates convergence; when they oppose, it strengthens surprising student behavior supported by the verifier, allowing the student to improve beyond the teacher.
\looseness=-1

We evaluate OPRD across mathematical reasoning \citep{maa2024to2025aime, dekoninck2026beyond, he2024olympiadbench} and logical reasoning tasks \citep{stojanovski2026reasoning} in two main weak-to-strong scenarios. In successive model transfer, OPRD reaches weak-teacher performance with 33--67\% fewer student updates than GRPO \citep{shao2024deepseekmath} and achieves up to 22.7 percentage points higher performance at early checkpoints. Unlike OPD, it then moves beyond the teacher rather than saturating after the initial transfer (\autoref{fig:overview}, bottom-left). In the multi-teacher setting, OPRD distills four specialized smaller-scale teachers into a single stronger student, reaching teacher-level performance with 55\% fewer updates than Mix-RL; the resulting student ultimately outperforms all four specialists (\autoref{fig:overview}, bottom-right).
With the same number of rollouts per update, these gains reflect improved sample efficiency during student training.
The benefit extends to conventional strong-to-weak distillation, where OPRD moves beyond OPD's plateau through verifier-driven optimization. Together, these results show that weak teachers can accelerate the post-training of stronger models without limiting students to their teachers' capabilities, opening a practical path to reusing post-training gains across model generations and domains at scale.
\looseness=-1

\paragraph{Contributions.}
In summary, our key contributions in this paper are as follows.
\vspace{-5pt}
\begin{itemize}[leftmargin=*, itemsep=2pt]

    \item \textbf{Weak-to-Strong Generalization.} 
    We study how post-training gains from weaker models can be transferred to stronger students in two practical scenarios: successive model transfer and multi-domain consolidation. We identify the central challenge as exploiting these gains without making either the weak policy or its policy shift a separate optimization target.
    \looseness=-1

    \item \textbf{On-Policy Reverse Distillation.}
    We introduce OPRD, which evaluates a weak teacher's policy shift relative to its reference policy on student rollouts and amplifies the component of the student's verifier-driven policy gradient along that direction. Because OPRD only rescales verifier-supported updates, it accelerates the student's own optimization while preserving its stationary points, allowing the student to move beyond the teacher.
    \looseness=-1
    
    \item \textbf{Empirical Evaluation and Analysis.}
    Across successive-model and multi-teacher settings, OPRD reaches the final performance of competing methods substantially earlier and ultimately outperforms both RL and distillation baselines (\S\ref{subsec:weak_to_strong}, \S\ref{subsec:multi_teacher}). We further confirm that these gains extend to conventional strong-to-weak distillation (\S\ref{subsec:strong_to_weak}). We also compare against recent weak-to-strong methods (\S\ref{subsec:broader_baselines}), analyze the design and dynamics of teacher guidance (\S\ref{subsec:component_analysis}), examine practical challenges and mitigations (\S\ref{subsec:challenges_discussion}), and study student reasoning and response style under teacher guidance (\S\ref{subsec:qualitative_analysis}).
    \looseness=-1
    
\end{itemize}

\section{Method}
\label{sec:method}

\subsection{Preliminary}\label{subsec:preliminary}

\paragraph{Reinforcement Learning with Verifiable Rewards (RLVR).}
RLVR optimizes a language-model policy using rewards computed by programmatic verifiers, such as exact-answer checks or code execution, and has become central to reasoning post-training \citep{shao2024deepseekmath, guo2025deepseek}.
For $x\sim\mathcal D$, the student samples $y\sim\pi_{\theta}(\cdot\mid x)$ and visits prefixes $s_t=(x,y_{<t})$.
Let $A_t$ denote the advantage assigned to token $t$ and $\mathbf z_{t}$ the corresponding next-token logits at prefix $s_t$.
The token-level policy gradient is
\looseness=-1
\begin{equation}
\mathbf g_{t}
:=
A_t
\nabla_{\mathbf z_{t}}
\log\pi_{\theta}(y_t\mid s_t).
\label{eq:rlvr_token_gradient}
\end{equation}
OPRD later rescales this gradient while preserving the RLVR objective, so the student's attainable performance is determined by the verifier objective and its own policy class rather than being bounded by the teacher's capacity.
\looseness=-1

\vspace{-10pt}
\paragraph{On-Policy Distillation (OPD).}
OPD reduces the training--inference distribution mismatch by sampling responses from the student and querying the teacher at each visited prefix, thereby providing dense token-level supervision over the student's inference-time state distribution \citep{gu2024minillm, agarwal2024policy, ko2024distillm}.
A common reverse-KL formulation is
\looseness=-1
\begin{equation}
\mathcal L_{\mathrm{OPD}}(\theta)
:=
\mathop{\mathbb E}\limits_{\substack{
x\sim\mathcal D\\
y\sim\pi_\theta(\cdot\mid x)
}}
\left[
\sum_t
D_{\mathrm{KL}}
\left(
\pi_\theta(\cdot\mid s_t)
\,\middle\|\,
\pi_T(\cdot\mid s_t)
\right)
\right].
\label{eq:opd_objective}
\end{equation}
Equivalently, OPD can be implemented as token-level policy optimization on student-sampled tokens using the teacher-to-student log-probability ratio as the advantage, with negligible empirical differences from direct reverse-KL optimization.
OPD is increasingly used in frontier-model post-training for reasoning and capability integration across domains \citep{ma2026mopd, xiao2026mimo, zeng2026glm, yang2026nemotron}. Recent methods combine teacher matching with reinforcement learning to pair dense teacher supervision with outcome-based optimization \citep{xu2025kdrl,ramos2026recipe}. Even in these hybrid methods, however, teacher matching remains a separate objective, leaving the teacher policy as a direct optimization target.
\looseness=-1

\vspace{-10pt}
\paragraph{Weak-to-Strong Generalization.}
Weak-to-strong generalization studies whether a more capable model can learn from weaker supervisors, such as smaller models or imperfect human feedback, and ultimately outperform them \citep{burns2024weaktostrong}.
Prior work has used weak labels, preferences, and fixed reasoning trajectories to supervise stronger students. Refinement methods help the student exploit its own representations and greater capacity, but often recover only part of the gap to strong supervision \citep{yang2024weak, somerstep2025a, pmlr-v267-dong25g, pmlr-v267-medvedev25a}.
OPD instead provides the full next-token distribution $\bar{\pi}_T(\cdot\mid s_t)$ at each student-visited prefix, where $\bar{\pi}_T$ denotes either the weak teacher or a target policy derived from it.
Under realizability, the resulting KL objective has the pointwise minimizer
\looseness=-1
\begin{equation}
\operatorname*{arg\,min}_{\pi(\cdot\mid s_t)}
D_{\mathrm{KL}}\!\left(
\pi(\cdot\mid s_t)
\,\middle\|\,
\bar{\pi}_T(\cdot\mid s_t)
\right)
=
\bar{\pi}_T(\cdot\mid s_t).
\label{eq:teacher_matching_target}
\end{equation}
Alternative teacher-derived targets only change which policy the student matches, while adding reinforcement learning yields a compromise between teacher matching and reward maximization \citep{xu2025kdrl, ramos2026recipe}. In both cases, the student remains directly optimized toward a policy defined by the weak teacher. OPRD instead extracts the policy change learned during weak-model post-training and uses it only to rescale the stronger student's own policy gradient.
\looseness=-1

\subsection{On-Policy Reverse Distillation}
\label{subsec:method}

\begin{figure*}[t]
    \centering
    \includegraphics[
        width=\textwidth
    ]{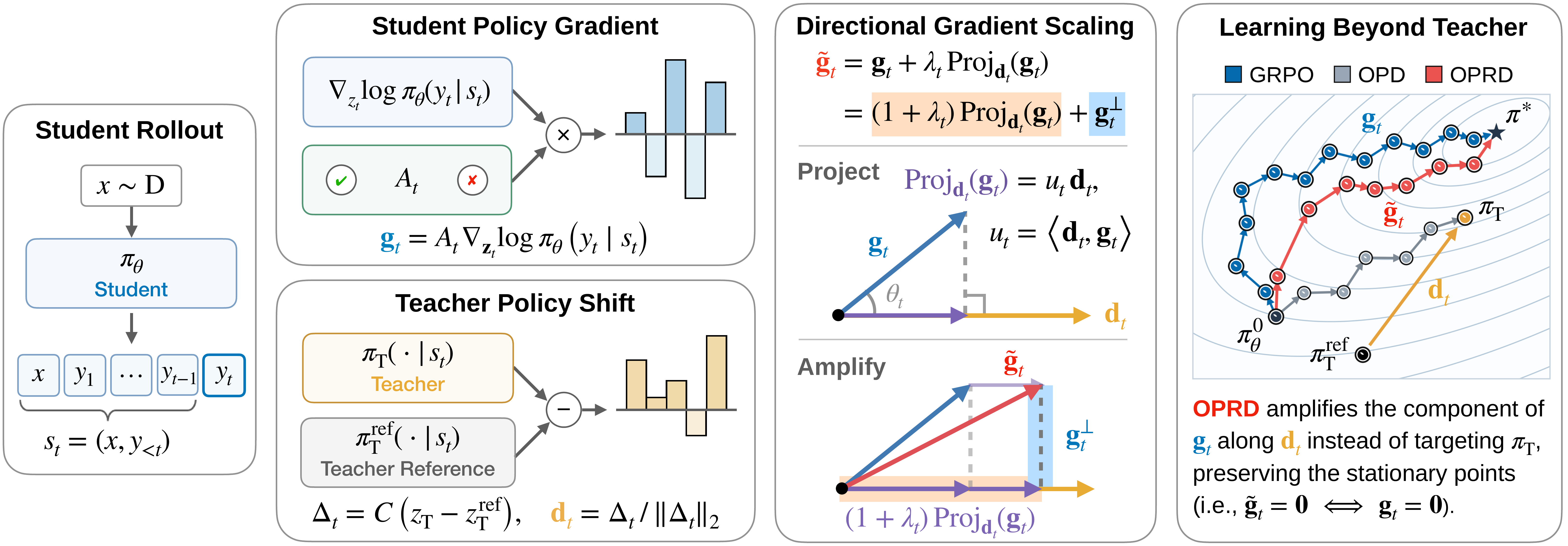}
    \caption{
    \textbf{Conceptual overview of On-Policy Reverse Distillation (OPRD).}
    The figure illustrates OPRD's gradient correction procedure for a single query. Here, $\mathbf z_T$ denotes the logits of the weak teacher after post-training and $\mathbf z_T^{\rm ref}$ those of its reference policy, and $\mathcal{C}$ denotes mean-centering. Their centered difference $\boldsymbol{\Delta}_t$ is the teacher's policy shift at that student-visited prefix, and OPRD keeps only its unit direction $\mathbf d_t$. In practice, we use a simple top-10 truncation under the student policy to focus the correction on its high-probability vocabulary region. The rightmost panel provides a conceptual view of the resulting student trajectory in the optimization landscape, where the student follows the verifier-driven policy gradient $\mathbf g_t$ with its component along $\mathbf d_t$ amplified by $1+\lambda_t$ at each token and its orthogonal component left unchanged.
    \looseness=-1
    }
    \label{fig:oprd_concept}
\end{figure*}

\paragraph{Overview.}
OPRD transfers the policy change learned during teacher post-training rather than matching the teacher's final policy.
At each student-visited prefix, it extracts the local direction of this change relative to the teacher's reference policy and uses its alignment with the verifier-driven student gradient to rescale only the gradient component along that direction.
Because the teacher signal only rescales the student's own gradient, it can accelerate verifier-supported optimization without defining an independent optimization target.
Positive-alignment scaling is active from the outset to amplify updates supported by both the verifier and the teacher, whereas negative-alignment scaling is gradually increased to reinforce verifier-supported departures beyond the weak teacher.
\looseness=-1

\vspace{-10pt}
\paragraph{Teacher Policy Shift.}
The teacher's final policy reflects the change acquired during RL post-training, preferences inherited from its reference policy, and behavior constrained by the weak model's limited capacity. Directly matching it would therefore make all of these part of the student's distillation target.
Let $\pi_T$ denote the frozen RL-trained teacher and $\pi_T^{\rm ref}$ its frozen pre-RL reference policy, and let $\mathbf z_T(s_t)$ and $\mathbf z_T^{\rm ref}(s_t)$ denote their next-token logit vectors at a student-visited prefix $s_t$.
To extract the RL-induced policy delta, we mean-center the difference between the teacher and reference logits, removing a common offset that does not affect relative token preferences.
With $\mathcal C(\mathbf v):=\mathbf v-\frac{1}{|\mathcal V|}(\mathbf 1^\top\mathbf v)\mathbf 1$, we define
\looseness=-1
\begin{oprdmethodbox}
\begin{equation}
\boldsymbol{\Delta}_t:=\mathcal C\!\left(\mathbf z_T(s_t)-\mathbf z_T^{\rm ref}(s_t)\right)=\mathcal C\!\left(\log\pi_T(\cdot\mid s_t)-\log\pi_T^{\rm ref}(\cdot\mid s_t)\right).
\label{eq:teacher_policy_shift}
\end{equation}
\end{oprdmethodbox}
\noindent Intuitively, $\boldsymbol{\Delta}_t$ captures the change in the teacher's relative next-token preferences induced by post-training, and the corresponding uncentered log-policy ratio admits an implicit-reward interpretation under KL-regularized policy optimization.
However, because this shift is learned within the weak teacher's policy class, it need not improve verifier reward for the stronger student.
OPRD therefore uses only its unit direction $\mathbf d_t:=\boldsymbol{\Delta}_t/\lVert\boldsymbol{\Delta}_t\rVert_2$ for gradient scaling rather than optimizing toward the shift itself. The student gradient determines whether the resulting correction follows or opposes this direction, independently of its raw magnitude.
\looseness=-1

\paragraph{Gradient Scaling along the Teacher Direction.}
At token $t$, OPRD decomposes the student's policy gradient $\mathbf g_t$ (\eqautoref{eq:rlvr_token_gradient}) relative to the teacher direction $\mathbf d_t$.
Let $u_t:=\mathbf d_t^\top\mathbf g_t$ denote their alignment coefficient, and define the projected and orthogonal components as $\operatorname{Proj}_{\mathbf d_t}(\mathbf g_t):=u_t\mathbf d_t$ and $\mathbf g_t^\perp:=\mathbf g_t-\operatorname{Proj}_{\mathbf d_t}(\mathbf g_t)$, respectively.
At optimization step $k$, OPRD uses a nonnegative scale $\lambda_t$:
\looseness=-1
\begin{oprdmethodbox2}
\begin{equation}
\begin{aligned}
\widetilde{\mathbf g}_t
:&= \mathbf g_t
+\lambda_t\operatorname{Proj}_{\mathbf d_t}(\mathbf g_t) \\
&=
\begin{gathered}[t]
\colorbox{ampbg}{%
  $\displaystyle
  (1+\lambda_t)\operatorname{Proj}_{\mathbf d_t}(\mathbf g_t)
  $%
}\\[-2pt]
\textcolor{ampfg}{\scriptsize\textit{amplified}}
\end{gathered}
\;+\;
\begin{gathered}[t]
\colorbox{unchbg}{%
  $\displaystyle
  \mathbf g_t^\perp
  $%
}\\[2pt]
\textcolor{unchfg}{\scriptsize\textit{unchanged}}
\end{gathered}
\, .
\end{aligned}
\label{eq:oprd_gradient_scaling}
\end{equation}
\end{oprdmethodbox2}
\noindent Essentially, OPRD decomposes the student's policy gradient into its projection onto the teacher informed direction and an orthogonal component, amplifying only the projected component by $1+\lambda_t$, while leaving the orthogonal component unchanged.
We backpropagate $\widetilde{\mathbf g}_t$ in place of $\mathbf g_t$ and use the resulting parameter gradients to update the student.
\looseness=-1

\vspace{-10pt}
\paragraph{Learning Beyond the Weak Teacher.}
Direct teacher matching makes the weak teacher's policy a target of student optimization, even when moving beyond the teacher would yield higher verifier reward.
OPRD instead uses the weak teacher only to rescale the student's own policy gradient.
At token $t$, this scaling can be written as the linear map $\widetilde{\mathbf g}_t=(\mathbf I+\lambda_t\mathbf d_t\mathbf d_t^\top)\mathbf g_t$.
For $\lambda_t\geq0$, the map is invertible and satisfies:
\looseness=-1
\begin{oprdmethodbox}
\begin{flalign}
&\hspace*{4em}
\makebox[13em][r]{\small (Stationarity)}
\hspace{2.5em}
\widetilde{\mathbf g}_t=\mathbf 0
\quad\text{if and only if}\quad
\mathbf g_t=\mathbf 0.
&&
\label{eq:oprd_stationarity}
\\[0.5em]
&\hspace*{4em}
\makebox[13em][r]{\small (Alignment Gain)}
\hspace{2.5em}
\left\langle\mathbf g_t,\widetilde{\mathbf g}_t\right\rangle
=\lVert\mathbf g_t\rVert_2^2+\lambda_t u_t^2
\geq\lVert\mathbf g_t\rVert_2^2.
&&
\label{eq:oprd_alignment_gain}
\end{flalign}
\end{oprdmethodbox}
\noindent Since \eqautoref{eq:oprd_stationarity} holds at every token, the scaling preserves the stationary points of the verifier objective for a fixed response.
\eqautoref{eq:oprd_alignment_gain} shows that the transformed gradient $\widetilde{\mathbf g}_t$ retains the first-order progress of $\mathbf g_t$ and adds the nonnegative alignment gain $\lambda_t u_t^2$, so greater alignment magnitude $|u_t|$ yields greater first-order progress. For a fixed response, these token-level gains sum into a nonnegative term in the guaranteed one-step ascent (see \appautoref{app:oprd-convergence}). OPRD can therefore accelerate the student's optimization without introducing a teacher-defined target.
\looseness=-1

\vspace{-10pt}
\paragraph{Asymmetric Alignment Scaling.}
The sign of $u_t$ indicates whether $\mathbf g_t$ aligns with or opposes $\mathbf d_t$, so scaling reinforces teacher-following updates when $u_t\geq0$ and verifier-supported departures when $u_t<0$. 
However, both signals may include reward-irrelevant bias (e.g., $\mathbf g_t=\mathbf g_t^\star+\boldsymbol{\epsilon}_t$, with $\mathbf g_t^\star$ denoting the reward-improving signal and $\boldsymbol{\epsilon}_t$ aggregating structured bias components). Scaling only one sign can then systematically magnify this bias term, causing it to accumulate over training (see \appautoref{app:alignment_amplification} and \ref{app:isolating_pos_neg_align}). 
Because negative-alignment updates are less reliable on initially weak student rollouts, we activate the positive branch immediately and gradually ramp up the negative branch.
At optimization step $k$, we set the token-wise scaling coefficient as
\looseness=-1
\begin{oprdmethodbox}
\begin{equation}
\lambda_{t}
:=
\begin{cases}
\lambda,
& u_t\geq 0, \\[2pt]
\lambda
\min\!\left\{
\frac{k}{K_{\mathrm{warm}}},
1
\right\},
& u_t<0,
\end{cases}
\label{eq:oprd_asymmetric_scaling}
\end{equation}
\end{oprdmethodbox}
\noindent where $\lambda$ is the scaling strength and $K_{\mathrm{warm}}$ is the warm-up horizon.
Since $\lambda_{t}\geq 0$, \eqsautoref{eq:oprd_stationarity}{eq:oprd_alignment_gain} continue to hold under this schedule. 
The negative-branch warm-up gradually mitigates the initial one-sided amplification caused by positive-only scaling. This preserves immediate teacher-aligned transfer while progressively strengthening verifier-supported departures from the weak teacher. We further analyze this design alongside alternative branch-scaling strategies in \appautoref{app:alignment_schedules}.
\looseness=-1
\section{Experiments}
We evaluate OPRD on mathematical and logical reasoning tasks in two primary settings: weak-to-strong transfer across successive model transfer and multi-domain consolidation with multiple specialized teachers. We additionally evaluate conventional strong-to-weak distillation to verify that OPRD does not depend on a particular teacher--student size ordering.
\looseness=-1

\clearpage
\subsection{Experimental Setup}
\label{subsec:experimental_setup}

\paragraph{Tasks and Models.}
For mathematics, we train on DAPO-Math-17K \citep{yu2025dapo} and evaluate on AIME\textquotesingle24, AIME\textquotesingle25 \citep{maa2024to2025aime}, HMMT\textquotesingle25 \citep{dekoninck2026beyond}, and OlympiadBench \citep{he2024olympiadbench}. For diverse reasoning tasks, we train and evaluate on four Reasoning Gym benchmarks~\citep{stojanovski2026reasoning}: Knights \& Knaves (K\&K), Quantum Lock, String Manipulation, and Countdown. For each Reasoning Gym task, we construct a fixed pool of 20,000 examples, using 19,800 for training and holding out 200 for evaluation. All teacher and student models are drawn from the Qwen3 family \citep{yang2025qwen3}, and the details are given in the corresponding setting descriptions. Each teacher is post-trained with GRPO \citep{shao2024deepseekmath} on the corresponding training data and held fixed during student training.
\looseness=-1

\vspace{-10pt}
\paragraph{Baselines.}
For the single-teacher experiments, we compare OPRD with GRPO \citep{shao2024deepseekmath}, OPD \citep{agarwal2024policy}, and KDRL \citep{xu2025kdrl}. GRPO performs verifier-only policy optimization, OPD matches the frozen teacher on student-generated prefixes, and KDRL serves as a representative hybrid baseline that combines verifier-based policy optimization with on-policy distillation. For the multi-teacher setting, we similarly compare OPRD with Mix-RL, MOPD \citep{ma2026mopd}, and KDRL, which serve as the corresponding verifier-only, distillation-only, and hybrid baselines, respectively.
\looseness=-1

\vspace{-10pt}
\paragraph{Training and Evaluation.}
For each experiment, OPRD and all baselines start from the same student checkpoint and use the same task-specific training prompts, batch size, rollout budget, and number of policy updates. Teacher-based methods also use the same frozen teacher checkpoint for each task.
We evaluate mathematics with Mean@16 and Reasoning Gym with Pass@1. While teacher and initial-student entries report fixed-checkpoint performance, trained-policy entries in most tables are averaged over five checkpoints to capture both learning speed and performance throughout training: at 30-update intervals for single-teacher settings and at 60-update intervals for multi-teacher distillation.
Full training and evaluation details are provided in \autoref{app:exp_details}.
\looseness=-1

\subsection{Weak-to-Strong Distillation for a Successor Model}
\label{subsec:weak_to_strong}

\paragraph{Settings.}
To study successive model transfer in a controlled setting, we perform weak-to-strong distillation across scales within the same model family.
Under the default configurations in \secautoref{subsec:experimental_setup}, we pair a post-trained Qwen3-4B teacher with a Qwen3-8B student for math, and Qwen3-4B-Base teachers with separately trained Qwen3-8B-Base students for the four Reasoning Gym tasks.
Additional Qwen3-Base results for mathematics and three other reasoning tasks are provided in \appautoref{app:additional_successive_model}.
\looseness=-1

\begin{table*}[!t]
\caption{
\textbf{Main experimental results for successive model transfer on mathematics and reasoning tasks.}
We use intermediate GRPO checkpoints of 4B-scale Qwen3 models as teachers and report Mean@16 for mathematics and Pass@1 for Reasoning Gym. Teacher and initial-student rows report single-checkpoint results, while trained-policy rows average evaluations at steps 30, 60, 90, 120, and 150 to summarize performance over training. The best result in each column is shown in \textbf{bold}. Detailed task-level curves and results with standard deviations are provided in \appautoref{app:detailed_curve_successive_model} and \appautoref{app:standard_deviation}, respectively.
\looseness=-1
}
\label{tab:main_results}
\centering

\begingroup
\small
\setlength{\tabcolsep}{2.0pt}
\renewcommand{\arraystretch}{1.05}

\begin{tabularx}{\linewidth}{
@{}
l
*{5}{>{\centering\arraybackslash}X}
@{}p{6pt}@{}
*{5}{>{\centering\arraybackslash}X}
@{}
}
\toprule


&
\multicolumn{5}{c}{\textbf{Math Reasoning}}
& {}
&
\multicolumn{5}{c}{\textbf{Reasoning Gym}} \\
\cmidrule(lr){2-6}
\cmidrule(lr){8-12}

\textbf{Policy}
& AIME\textquotesingle24
& AIME\textquotesingle25
& HMMT\textquotesingle25
& Olympiad
& \textbf{Avg.}
& {}
& Knights
& Quantum
& String
& Count
& \textbf{Avg.} \\
\midrule


\rowcolor{blue!8}
&
\multicolumn{5}{c}{
\footnotesize
\textbf{Qwen3-4B (Teacher)}
$\rightarrow$
\textbf{Qwen3-8B (Student)}
}
& {}
&
\multicolumn{5}{c}{
\footnotesize
\textbf{Qwen3-4B-Base (Teacher)}
$\rightarrow$
\textbf{Qwen3-8B-Base (Student)}
} \\
\midrule


Teacher
& 42.50
& 38.75
& 21.25
& 52.15
& 38.66
& {}
& 57.50
& 46.58
& 32.00
& 42.50
& 44.65 \\

\midrule

Student
& 25.63
& 19.58
& 12.50
& 46.22
& 25.98
& {}
& 11.00
& \,\,\,3.14
& \,\,\,3.00
& \,\,\,3.00
& \,\,\,5.04 \\


+ GRPO
& 46.63
& 36.42
& 21.96
& 52.52
& 39.38
& {}
& 54.20
& 36.24
& 35.50
& 41.30
& 41.81 \\

+ OPD
& 46.92
& 38.21
& 21.42
& 51.24
& 39.44
& {}
& 55.00
& 39.62
& 35.10
& 41.60
& 42.83 \\

+ KDRL\footnotemark
& 53.46
& 43.13
& 26.08
& \textbf{53.28}
& 43.99
& {}
& 53.30
& 36.63
& 38.10
& 49.50
& 44.38 \\

\rowcolor{gray!15}
+ \textbf{OPRD}
& \textbf{66.92}
& \textbf{56.04}
& \textbf{31.42}
& 53.26
& \textbf{51.91}
& {}
& \textbf{73.30}
& \textbf{51.11}
& \textbf{42.20}
& \textbf{54.10}
& \textbf{55.18} \\

\bottomrule
\end{tabularx}

\endgroup
\vspace{7pt}
\end{table*}

\footnotetext{
We adopt KDRL \citep{xu2025kdrl} as the representative baseline combining distillation with RLVR (see also GKD \citep{agarwal2024policy} and dGRPO \citep{ramos2026recipe}).
For a fair comparison, the coefficient on the OPD objective is annealed to 0.0.
\looseness=-1
}

\vspace{-10pt}
\paragraph{OPRD Accelerates Learning While Continuing Beyond the Weak Teacher.}
\autoref{fig:overview} (bottom-left) shows that OPD improves rapidly but plateaus near the teacher average, whereas GRPO progresses more gradually. OPRD matches OPD's initial acceleration, quickly surpasses the weak teacher, and reaches GRPO's end-of-training performance substantially earlier. Averaged over five evenly spaced checkpoints to summarize the learning curve, \autoref{tab:main_results} shows gains of 7.92 points on mathematics and 10.80 points on Reasoning Gym over the strongest baseline. The initially similar trajectories of OPD and OPRD indicate that teacher guidance is useful while the student still trails it, but their later divergence suggests that direct policy matching becomes restrictive once the student discovers reward-supported improvements beyond the teacher.
\looseness=-1

\subsection{Multi-Teacher Weak-to-Strong Distillation}
\label{subsec:multi_teacher}

\paragraph{Settings.}
Multi-teacher distillation asks whether capabilities acquired by separately post-trained task specialists can be consolidated into a single policy. We use the same Reasoning Gym configuration and method-specific settings as in \secautoref{subsec:experimental_setup}, but each training batch now mixes the four tasks equally. Teacher-based methods pair each example with its corresponding Qwen3-4B-Base specialist. Because this reduces exposure to each task by roughly a factor of four, we train for 300 policy updates. Despite the longer run, we retain the single-teacher coefficient schedules rather than extending them to 300 updates.
\looseness=-1


\begin{table*}[!t]
\caption{
\textbf{(Left) Experimental results for multi-teacher distillation on Reasoning Gym.}
We consolidate four task-specific Qwen3-4B-Base teachers into a single Qwen3-8B-Base student. The teacher and initial-student rows report fixed-checkpoint performance, while each trained-policy row averages Pass@1 over checkpoints at steps 60, 120, 180, 240, and 300. Detailed learning curves for each task are provided in \appautoref{app:multiteacher_distillation}.
\textbf{(Right) Experimental results for strong-to-weak distillation.}
We evaluate Qwen3-8B $\rightarrow$ Qwen3-1.7B on AIME\textquotesingle24 and Qwen3-8B-Base $\rightarrow$ Qwen3-0.6B on Knights \& Knaves. Each trained-policy row averages Mean@16 and Pass@1, respectively, over five checkpoints. Detailed learning curves are provided in \appautoref{app:strong_to_weak_distillation}. The best result in each column is shown in \textbf{bold}.
\looseness=-1
}
\label{tab:additional_results}
\centering


\begin{minipage}[t]{0.55\linewidth}
\vspace{0pt}
\centering

\begingroup
\small
\setlength{\tabcolsep}{1.8pt}
\renewcommand{\arraystretch}{1.08}

\begin{tabularx}{\linewidth}{
@{}
l
*{5}{>{\centering\arraybackslash}X}
@{}
}
\toprule

\textbf{Policy}
& \textbf{Knights}
& \textbf{Quantum}
& \textbf{String}
& \textbf{Count}
& \textbf{Avg.} \\
\midrule

\rowcolor{blue!8}
\multicolumn{6}{c}{
\footnotesize
\textbf{4 Teachers (Qwen3-4B-Base)}
$\rightarrow$
\textbf{1 Student (Qwen3-8B-Base)}
} \\
\midrule

Teachers
& 57.50
& 46.58
& 32.00
& 42.50
& 44.65 \\

\midrule

Student
& 11.00
& \,\,\,3.14
& \,\,\,3.00
& \,\,\,3.00
& \,\,\,5.04 \\

+ Mix-RL
& 64.90
& 43.91
& 35.90
& 46.00
& 47.68 \\

+ MOPD
& 57.10
& 37.70
& 35.50
& 42.10
& 43.10 \\

+ KDRL
& 65.90
& 44.09
& 35.00
& 44.90
& 47.47 \\

\rowcolor{gray!15}
+ \textbf{OPRD}
& \textbf{80.70}
& \textbf{59.68}
& \textbf{39.70}
& \textbf{55.00}
& \textbf{58.77} \\

\bottomrule
\end{tabularx}

\endgroup
\end{minipage}\hfill%
\begin{minipage}[t]{0.43\linewidth}
\vspace{0pt}
\centering

\begingroup
\small
\setlength{\tabcolsep}{1.8pt}
\renewcommand{\arraystretch}{1.08}

\begin{tabularx}{\linewidth}{
@{}
l
>{\hsize=0.90\hsize\linewidth=\hsize
  \centering\arraybackslash}X
>{\hsize=1.35\hsize\linewidth=\hsize
  \centering\arraybackslash}X
>{\hsize=0.75\hsize\linewidth=\hsize
  \centering\arraybackslash}X
@{}
}
\toprule

\textbf{Policy}
& \textbf{AIME\textquotesingle24}
& \mbox{\textbf{Knights Knaves}}
& \textbf{Avg.} \\
\midrule

\rowcolor{blue!8}
&
{\footnotesize\bfseries 8B $\rightarrow$ 1.7B}
&
{\footnotesize\bfseries 8B-Base $\rightarrow$ 0.6B}
& {} \\
\midrule

Teacher
& 52.29
& 64.50
& 58.40 \\

\midrule

Student
& 10.00
& \,\,\,5.00
& \,\,\,7.50 \\

+ GRPO
& 18.92
& 20.60
& 19.76 \\

+ OPD
& 29.79
& 20.20
& 25.00 \\

+ KDRL
& 25.33
& 33.70
& 29.52 \\

\rowcolor{gray!15}
+ \textbf{OPRD}
& \textbf{33.58}
& \textbf{49.40}
& \textbf{41.49} \\

\bottomrule
\end{tabularx}

\endgroup
\end{minipage}

\vspace{3pt}
\end{table*}

\vspace{-10pt}
\paragraph{OPRD Consolidates Heterogeneous Specialists without Cross-Task Tradeoffs.}
\autoref{fig:overview} (bottom-right) shows that MOPD rapidly approaches the specialist average but then plateaus, whereas Mix-RL improves more gradually. OPRD combines this early transfer with continued improvement throughout training. \autoref{tab:additional_results} reports an average of 58.77, exceeding Mix-RL by 11.09 points and the specialist average by 14.12 points. OPRD also surpasses the corresponding specialist on all four tasks despite their distinct structures and objectives, indicating joint improvement rather than a cross-task tradeoff. OPRD may reduce cross-task interference by amplifying only the component of each task's verifier-driven student gradient along its teacher-shift direction, rather than matching the full specialist policy. This projection resembles PCGrad \citep{yu2020gradient}, but is applied between each task's student gradient and teacher direction rather than between conflicting task gradients.
\looseness=-1

\subsection{Strong-to-Weak Distillation}
\label{subsec:strong_to_weak}

\paragraph{Settings.}
We evaluate conventional strong-to-weak distillation under the default configurations in \secautoref{subsec:experimental_setup}. For mathematics, we pair a Qwen3-8B teacher with a Qwen3-1.7B student, and we pair a Qwen3-8B-Base teacher with a Qwen3-0.6B student for Knights \& Knaves. Both teachers are taken from step 105 of task-specific GRPO training. Both settings use prompt batch and mini-batch sizes of 64, and we schedule each method's distillation coefficient over 45 updates.
\looseness=-1

\vspace{-10pt}
\paragraph{OPRD Does Not Depend on Teacher--Student Capacity Ordering.}
\autoref{tab:additional_results} shows that OPRD remains effective in conventional strong-to-weak distillation, outperforming OPD by 3.79 points on AIME\textquotesingle24 and 29.20 points on Knights \& Knaves. Recent studies show that standard OPD can fail when capacity or distributional gaps make teacher supervision difficult to exploit, so a stronger teacher need not yield a better student \citep{li2026rethinking,fu2026revisiting}. Consistent with these findings, OPD improves initially in both settings but quickly saturates well below OPRD. KDRL's lower score further suggests that supplementing policy matching with verifier feedback does not fully resolve this issue. OPRD instead amplifies only the component of the student's verifier-driven gradient aligned with the teacher's policy delta. This allows the student to benefit from teacher guidance along reward-supported directions it can realize, without having to reproduce the stronger policy in full.
\looseness=-1

\section{Analysis}
\label{sec:analysis}


\begin{table*}[!t]
\caption{
\textbf{(Left) Comparison with various weak-to-strong baselines.} 
We evaluate 4B-to-8B transfer using instruction-tuned models for math and base models for reasoning tasks. Teacher and initial-student rows show fixed-checkpoint results, while trained-policy rows show checkpoint averages. The best result in each column is shown in \textbf{bold}. See \appautoref{subsec:baseline_implementation} and \appautoref{subsec:baseline_learning_curve} for implementation details and learning curves, respectively.
\textbf{(Right) Ablation Study on Teacher-Checkpoint Quality.}
For each task, the bars report the performance of Qwen3-4B-Base teacher checkpoints obtained at GRPO training steps 15, 60, 105, and 150, while the lines show the learning curves of Qwen3-8B-Base students trained with OPRD using the corresponding checkpoints. All configurations other than the teacher checkpoint follow the defaults in \secautoref{subsec:experimental_setup}.
\looseness=-1
}
\label{tab:baseline_and_teacher_analysis}
\centering


\begin{minipage}[t]{0.475\linewidth}
\vspace{0pt}
\centering

\begingroup
\small
\setlength{\tabcolsep}{2.0pt}
\renewcommand{\arraystretch}{1.15}

\begin{tabularx}{\linewidth}{
@{}
l
*{4}{>{\centering\arraybackslash}X}
@{}
}
\toprule

\textbf{Policy}
& \textbf{AIME\textquotesingle24}
& \textbf{Knights}
& \textbf{String}
& \textbf{Avg.} \\
\midrule

\rowcolor{blue!8}
&
\multicolumn{3}{c}{
\textbf{4B (-Base) $\rightarrow$ 8B (-Base)}
}
& {} \\
\midrule

Teacher
& 42.50
& 57.50
& 32.00
& 44.00 \\

\midrule

Student
& 25.63
& 11.00
& \,\,\,3.00
& 13.21 \\

+ GRPO
& 46.63
& 54.20
& 35.50
& 45.44 \\

+ OPD
& 46.92
& 55.00
& 35.10
& 45.67 \\

\midrule

+ W2SR-P
& 45.42
& 62.50
& 38.00
& 48.64 \\

+ S2L-PO\footnotemark
& 60.54
& 63.40
& 38.70
& 54.21 \\

+ OPSD\footnotemark
& 15.88
& 24.60
& 24.60
& 21.69 \\

\midrule

+ Direct-OPD
& 35.04
& 49.20
& 29.20
& 37.81 \\

+ W2S-OPD
& 52.96
& 59.50
& 35.20
& 49.22 \\

\midrule

\rowcolor{gray!15}[\dimexpr\tabcolsep+1pt\relax][\dimexpr\tabcolsep+1pt\relax]
+ \textbf{OPRD}
& \textbf{66.92}
& \textbf{73.30}
& \textbf{42.20}
& \textbf{60.81} \\

\bottomrule
\end{tabularx}

\endgroup
\end{minipage}\hfill%
\begin{minipage}[t]{0.511\linewidth}
\vspace{0pt}
\centering

\begingroup
\small

\newcommand{\dummycomposite}{%
\fbox{%
\begin{minipage}[c][2.15cm][c]{0.94\linewidth}
\centering

\begin{minipage}[c][1.70cm][c]{0.29\linewidth}
\centering
\textcolor{gray!70}{\textbf{Teacher bars}}\\[4pt]
\footnotesize
Steps\\
15,\;60,\;105,\;150
\end{minipage}\hfill%
\begin{minipage}[c][1.70cm][c]{0.66\linewidth}
\centering
\textcolor{gray!70}{\textbf{Student curves}}\\[4pt]
\footnotesize
Student GRPO\\
OPRD$_{15}$,\;
OPRD$_{60}$,\;
OPRD$_{105}$,\;
OPRD$_{150}$
\end{minipage}

\end{minipage}%
}%
}


\vspace{-3pt}
\includegraphics[
    width=\linewidth,
    keepaspectratio
]{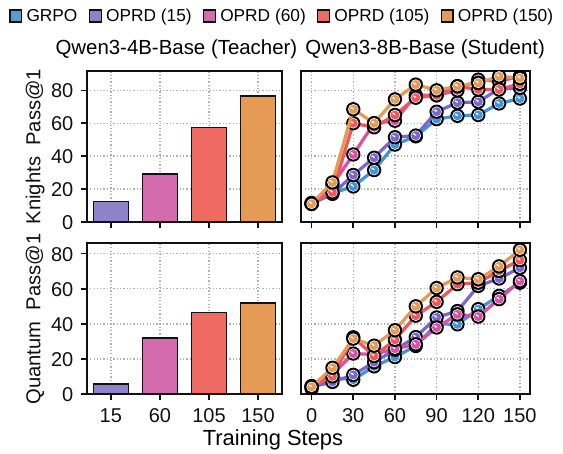}

\endgroup
\end{minipage}

\vspace{4pt}
\end{table*}

\footnotetext[2]{
S2L-PO \citep{ren2026smaller} originally uses a smaller \textit{base} model as the frozen explorer, reflecting the method's central motivation to exploit policy-level diversity. Here, we instead use the post-RL weak-teacher checkpoint as the explorer.
\looseness=-1
}

\footnotetext[3]{
OPSD \citep{zhao2026selfdistilled} and SDPO \citep{hubotter2026reinforcement} use correct self-generated rollouts as privileged information. Here, we instead use correct trajectories generated by the weak teacher, while the KL divergence remains computed against the self-teacher.
\looseness=-1
}

\subsection{Broader Comparison with Weak-to-Strong Methods}
\label{subsec:broader_baselines}

\paragraph{OPRD Outperforms Methods Using Off-Policy Generations from Weak Teacher.}
\autoref{tab:baseline_and_teacher_analysis} (left) compares OPRD with three baselines that use weak-teacher generations differently. W2SR-P \citep{yuan-etal-2026-incentivizing} performs SFT on verified-correct teacher trajectories; S2L-PO \citep{ren2026smaller} mixes off-policy rollouts from a weak explorer with student rollouts in shared GRPO groups before transitioning to fully on-policy RLVR; and our OPSD variant \citep{zhao2026selfdistilled} uses a verified weak-teacher draft as privileged context for self-distillation. W2SR-P and S2L-PO improve over the initial student, indicating that weak-teacher trajectories can provide a useful bootstrap within the same Qwen3 family. However, reliance on off-policy teacher trajectories can create train--inference mismatch \citep{agarwal2024policy} and need not transfer underlying capabilities across model gaps \citep{gudibande2023false}. OPRD instead remains fully on-policy and uses the teacher shift only to rescale the aligned component of the verifier gradient. Empirically, OPRD reaches 60.81, exceeding the strongest alternative, S2L-PO, by 6.60 points and achieving the best score on all three tasks.
\looseness=-1

\vspace{-10pt}
\paragraph{Rescaling the Verifier Gradient Outperforms Direct Optimization of the Weak Policy Delta.}
The lower rows of \autoref{tab:baseline_and_teacher_analysis} (left) compare OPRD with two closely related concurrent works, Direct-OPD \citep{feng2026weak} and W2S-OPD \citep{yu2026weak}, both of which derive the student's objective directly from the weak policy shift. Direct-OPD uses the corresponding log-ratio as a dense reward, whereas W2S-OPD reanchors the shift to the student's base policy and distills the resulting proxy teacher. Given the sensitivity of both methods to the relative strength of the transferred shift, we follow the hyperparameter settings reported in the original papers. However, both methods rely solely on the information encoded in the shift. OPRD instead retains verifier supervision through the orthogonal component $\mathbf g_t^\perp$, allowing the student to pursue reward-supported directions not captured by the weak policy delta. Indeed, OPRD reaches 60.81, outperforming W2S-OPD by 11.59 points and Direct-OPD by 23.00 points on average.
\looseness=-1

\subsection{Design and Dynamics of Teacher Guidance}
\label{subsec:component_analysis}

\paragraph{Better-Trained Weak Teachers Provide More Effective Guidance.}
\autoref{tab:baseline_and_teacher_analysis} (right) shows that later, better-performing GRPO checkpoints of the 4B teacher generally lead to faster learning under OPRD for the 8B student on both reasoning tasks. Because $\boldsymbol{\Delta}_t$ is normalized before scaling, this benefit cannot be attributed to shift magnitude alone; instead, later checkpoints appear to encode a more reward-informative direction, yielding a larger verifier-gradient component for OPRD to amplify. Notably, the step-60 teacher achieves only 29.0\% Pass@1 on Knights \& Knaves, yet the corresponding OPRD student rapidly reaches approximately 88\%, far surpassing both the teacher and GRPO. Thus, while teacher quality affects the strength of OPRD's acceleration, the teacher's absolute performance need not impose a ceiling on the student.
\looseness=-1


\begin{figure*}[!t]
\centering

\begingroup

\captionsetup[subfigure]{
  font=small,
  labelfont=normalfont,
  justification=centering,
  singlelinecheck=true,
  skip=5pt
}


\begin{subfigure}[t]{0.32\linewidth}
\centering

\includegraphics[
    width=\linewidth,
    keepaspectratio
]{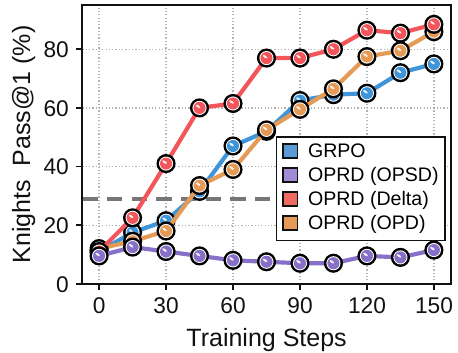}

\caption{Construction of $\mathbf{d}_t$}
\label{fig:analysis_delta_signal}
\end{subfigure}
\hfill
\begin{subfigure}[t]{0.32\linewidth}
\centering

\includegraphics[
    width=\linewidth,
    keepaspectratio
]{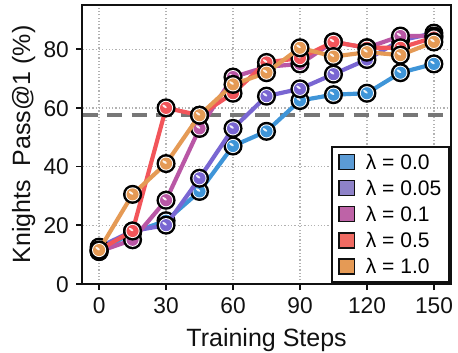}

\caption{Amplification strength $\lambda$}
\label{fig:analysis_lambda_scale}
\end{subfigure}
\hfill
\begin{subfigure}[t]{0.32\linewidth}
\centering

\includegraphics[
    width=\linewidth,
    keepaspectratio
]{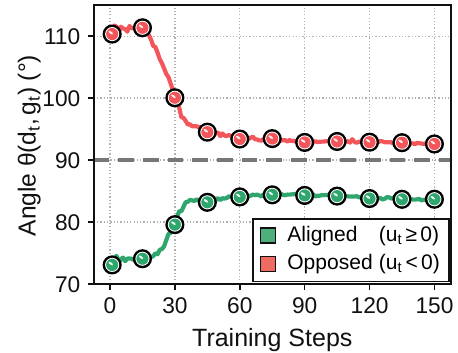}

\caption{Gradient alignment}
\label{fig:analysis_degree_changes}
\end{subfigure}


\caption{
\textbf{(a) Ablations of scaling-direction construction.} All OPRD variants use the step-60 GRPO checkpoint as the weak teacher. For OPSD, a verified draft generated by this teacher is provided as privileged context.
\textbf{(b) Ablation of directional amplification strength.} 
We vary $\lambda$, the coefficient applied to OPRD's directional correction term. $\lambda=0$ corresponds to GRPO. The gray dashed line marks the performance of the teacher checkpoint. All other settings follow the default configurations in \secautoref{subsec:experimental_setup}.
\textbf{(c) Alignment dynamics between $\mathbf d_t$ and $\mathbf g_t$.} On Knights \& Knaves, we track $\theta_t$ between $\mathbf d_t$ and $\mathbf g_t$ during OPRD with a Qwen3-8B-Base student and Qwen3-4B-Base weak teacher. Excluding rollout groups with $\mathbf g_t=\mathbf 0$ (identical rewards within the group), we report token-averaged angles for aligned ($u_t\geq0$) and opposed ($u_t<0$) tokens.
\looseness=-1
}
\label{fig:three_panel_analysis}

\endgroup
\end{figure*}

\vspace{-10pt}
\paragraph{Weak Policy Delta Provides the Most Effective Scaling Direction.}
\autoref{fig:analysis_delta_signal} compares three choices for the guidance direction $\mathbf{d}_t$: the normalized weak policy delta $\boldsymbol{\Delta}_t$, the OPD teacher-matching gradient, and the OPSD self-distillation gradient. The weak policy delta yields the fastest and most sustained gains. Comparing the post-trained teacher with its reference isolates the reward-relevant update, and their log-policy ratio admits an implicit-reward interpretation. OPRD projects the verifier gradient onto this direction and amplifies its aligned component, exploiting the teacher's reward information without inheriting its capacity ceiling. In contrast, OPD captures the full teacher--student mismatch and offers limited acceleration when the teacher is too weak, while OPSD's off-policy supervision can restrict exploration of alternative reasoning paths \citep{kim2026does, kaur2026rethinking}. Although OPD becomes more effective with a better-trained teacher, the weak policy delta is still the fastest and most reliable guidance signal (see \appautoref{app:delta_signal}).
\looseness=-1

\vspace{-10pt}
\paragraph{Sufficient Directional Amplification Enables Early Acceleration.}
\autoref{fig:analysis_lambda_scale} examines $\lambda$, which scales the directional correction and thus controls the strength of teacher guidance. Every $\lambda>0$ improves final Pass@1 over $\lambda=0$ (GRPO). Larger values of $\lambda$ up to $0.5$ also yield faster gains early in training. This systematic relationship between guidance strength and learning speed confirms that OPRD's directional correction indeed drives the observed acceleration. Performance changes little beyond $\lambda=0.5$, so precise tuning is unnecessary once amplification is sufficiently strong. We therefore use $\lambda=0.5$ as the default.
\looseness=-1

\vspace{-10pt}
\paragraph{Teacher Guidance Bootstraps Early Learning but Becomes Less Influential over Time.}
\autoref{fig:analysis_degree_changes} tracks the mean angle $\theta_t$ between the guidance direction $\mathbf{d}_t$ and policy gradient $\mathbf{g}_t$. Early in training, the two directions exhibit substantial alignment for $u_t>0$ and opposition for $u_t<0$. This strong directional coupling allows the weak policy delta to bootstrap student learning. As training proceeds, the mean angle for $u_t>0$ increases toward $90^\circ$, while that for $u_t<0$ decreases toward $90^\circ$. Since $\lVert \operatorname{Proj}_{\mathbf d_t}(\mathbf g_t)\rVert_2/\lVert\mathbf g_t\rVert_2=|\cos\theta_t|$, this convergence toward orthogonality means that the component of $\mathbf g_t$ along $\mathbf d_t$ becomes smaller relative to the full policy gradient. This indicates that the evolving student gradient increasingly follows verifier-supported directions not captured by the teacher shift, so teacher guidance becomes less influential over time.
\looseness=-1

\subsection{Discussion of Key Challenges}
\label{subsec:challenges_discussion}

\paragraph{Vanishing Policy Gradients Limit OPRD's Teacher-Guided Correction.}
OPRD requires a nonzero verifier-driven policy gradient. In an additional strong-to-weak experiment pairing a Qwen3-8B-Base teacher with a Qwen3-1.7B-Base student, most Knights \& Knaves responses are invalid, so most rollout groups receive identical rewards (i.e., the resulting group-relative advantages and their contributions to $\mathbf g_t$ therefore vanish). For these groups, the projection onto $\mathbf d_t$ also vanishes, leaving no component for OPRD to amplify and hence no teacher-guided correction. As shown in \autoref{fig:failure_no_signal}, OPRD still accelerates learning relative to GRPO and KDRL, although all three remain below 20\% Pass@1, whereas OPD reaches 40.5\% using dense policy-matching targets that do not depend on verifier rewards. This challenge arises from the student's initial rollout distribution rather than the absence of a useful teacher signal. Such an extreme regime is less likely in our primary weak-to-strong setting, where the student has greater capacity than the teacher, but may still arise on sufficiently difficult tasks. A short task-specific SFT or distillation warm-up could bootstrap valid on-policy behavior before switching to OPRD.
\looseness=-1


\begin{figure*}[!t]
\centering

\begingroup

\captionsetup[subfigure]{
  font=small,
  labelfont=normalfont,
  justification=centering,
  singlelinecheck=true,
  skip=6pt
}


\begin{subfigure}[t]{0.32\linewidth}
\centering

\includegraphics[
    width=\linewidth,
    keepaspectratio
]{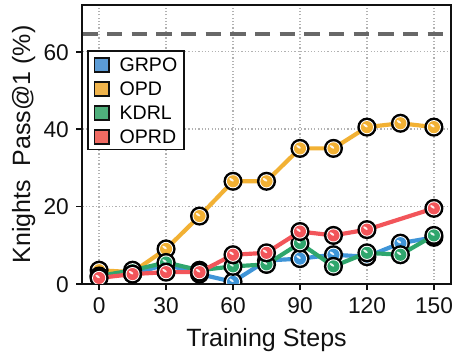}

\caption{Limited gradient signal}
\label{fig:failure_no_signal}
\end{subfigure}%
\hfill
\begin{subfigure}[t]{0.32\linewidth}
\centering

\includegraphics[
    width=\linewidth,
    keepaspectratio
]{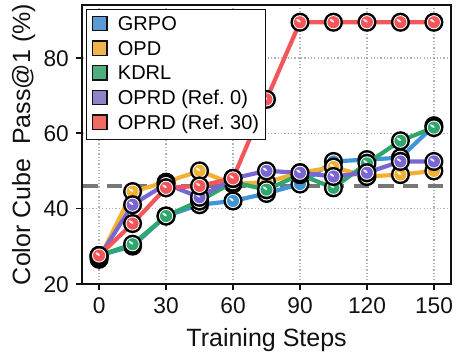}

\caption{Task performance}
\label{fig:failure_performance}
\end{subfigure}%
\hfill
\begin{subfigure}[t]{0.32\linewidth}
\centering

\includegraphics[
    width=\linewidth,
    keepaspectratio
]{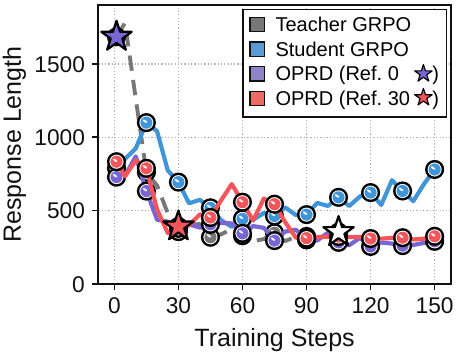}

\caption{Response Length}
\label{fig:failure_response_length}
\end{subfigure}

\caption{
\textbf{(a) Results under limited policy-gradient signal.} On Knights \& Knaves, we transfer a step-105 Qwen3-8B-Base teacher to a Qwen3-1.7B-Base student, with OPRD's negative-branch scale $\lambda_t$ warmed up over the first 75 steps.
\textbf{(b, c) Effect of reference-policy selection on length bias.}
On Color Cube, we transfer a step-105 Qwen3-4B-Base teacher $\pi_T$ to a Qwen3-8B-Base student, using either the step-0 or step-30 checkpoint from the same GRPO run as $\pi_T^{\rm ref}$. OPRD's negative-branch scale $\lambda_t$ is warmed up over the first 75 steps. The gray dashed line marks teacher performance, while the colored stars denote the mean response lengths of the two choices of $\pi_T^{\mathrm ref}$, and the white star marks that of $\pi_T$. All other settings follow \secautoref{subsec:experimental_setup}.
\looseness=-1
}
\label{fig:failure_case_analysis}

\endgroup
\end{figure*}

\vspace{-10pt}
\paragraph{Reference Policy Selection Can Prevent Length Bias from Distorting Teacher Guidance.}
As discussed in \secautoref{subsec:method} and \appautoref{app:asymetric_alignment}, both $\mathbf g_t$ and $\mathbf d_t$ can contain reward-irrelevant components such as $\boldsymbol{\epsilon}_t$, which the projection-and-amplification step can magnify. Response length is one example: when it correlates with verifier reward, both signals can encode a preference for longer or shorter responses, even if changing length does not itself improve reasoning quality. As shown in \figsautoref{fig:failure_performance}{fig:failure_response_length}, the step-0 reference $\pi_T^{\rm ref}$ produces substantially longer responses than the step-105 teacher $\pi_T$ on Color Cube. The resulting shift $\boldsymbol{\Delta}_t$ therefore contains a strong shortening component. With this reference, OPRD rapidly shortens its responses and achieves strong early gains. It nevertheless plateaus at 52.5\% Pass@1, below GRPO and KDRL, suggesting that the teacher-guided correction overemphasizes shortening at the expense of task-relevant reasoning. A simple mitigation is to move the reference to step 30, after the teacher's initial length collapse. This excludes some of the teacher's early gains from $\boldsymbol{\Delta}_t$ but substantially narrows the reference--teacher length gap and weakens the associated bias. OPRD then avoids the plateau and jumps to 89.5\%, discovering a more effective reasoning strategy. \appautoref{app:length_bias} shows the same pattern on Binary Matrix, where this reference policy adjustment is likewise effective.
\looseness=-1

\subsection{Student Behavior under Teacher Guidance}
\label{subsec:qualitative_analysis}


\newlength{\qualitativepanelheight}

\begin{figure*}[!t]
\centering

\setlength{\qualitativepanelheight}{0.32305\linewidth}

\captionsetup[subfigure]{
    justification=centering,
    singlelinecheck=true,
    skip=3pt
}

\begin{subfigure}[b]{0.6395\linewidth}
    \centering

    \begin{minipage}[c][\qualitativepanelheight][c]{\linewidth}
        \centering
        \includegraphics[
            width=\linewidth,
            trim={0 0 7.2bp 0},
            clip
        ]{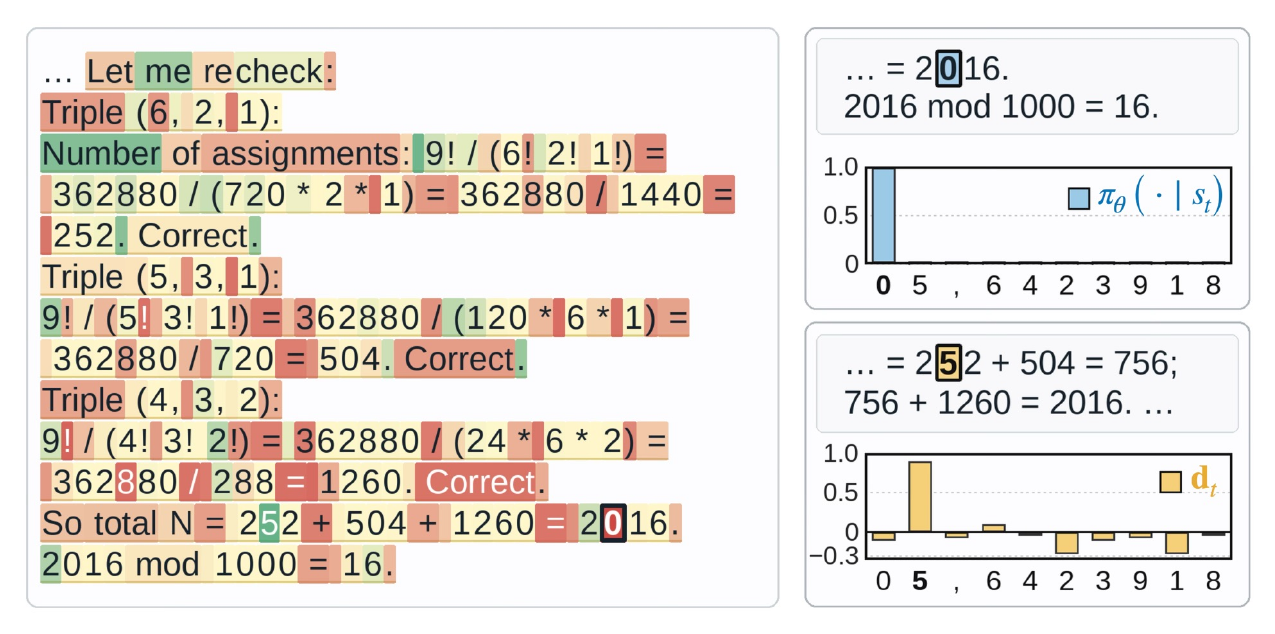}
    \end{minipage}

    \caption{Token alignment and reasoning paths}
    \label{fig:token_correction}
\end{subfigure}%
\hspace{0.0197\linewidth}%
\begin{subfigure}[b]{0.3408\linewidth}
    \centering

    \begin{minipage}[c][\qualitativepanelheight][c]{\linewidth}
        \centering
        \includegraphics[
            width=1.0\linewidth,
            trim={8.1bp 0 0 0},
            clip
        ]{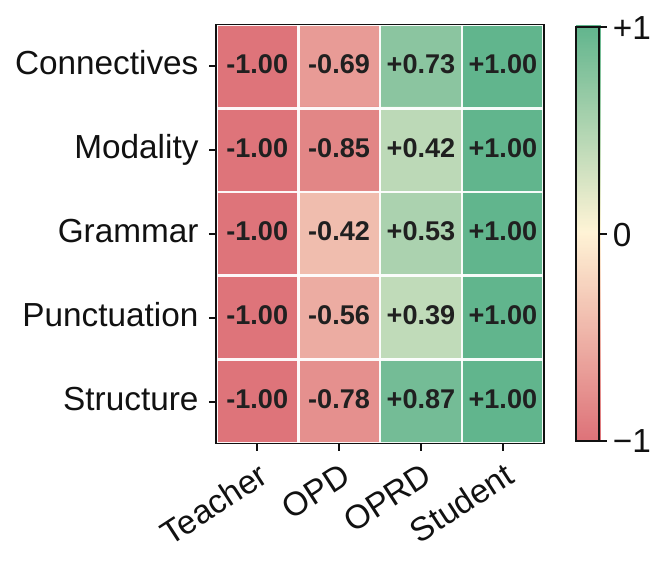}
    \end{minipage}

    \caption{Response style similarity}
    \label{fig:aime24_style_fingerprint}
\end{subfigure}

\caption{
\textbf{(a) Visualizing token alignment and reasoning continuations.} 
An AIME\textquotesingle25 response from the OPRD student at update 150. Green and red indicate positive and negative cosine similarity between $\mathbf d_t$ and the student policy gradient $\mathbf g_t$ with $A_t=1$, respectively (see \appautoref{app:token_visualization}). The token outlined in black, \captionopposedtoken{0}, has the lowest cosine similarity among displayed tokens.
The student's top-1 token \protect\studenttoken{0} completes \texttt{2016} directly. Forcing \protect\shifttoken{5}, the top-1 token under $\mathbf d_t$, leads the same student to this result through an intermediate sum. The plots show the student's top-10 token probabilities above and their teacher-shift values ($\mathbf d_t$) below.
\textbf{(b) Measuring similarity to teacher and student response styles.} 
On AIME\textquotesingle24, we compare response styles using 101 standardized features across five categories.
Normalized distance differences indicate whether each method's average style is closer to the weak teacher (red) or the GRPO-trained student at update 150 (green).
\looseness=-1
}
\label{fig:qualitative_analysis_fingerprint}

\end{figure*}

\paragraph{OPRD Can Move Beyond the Teacher's Reasoning Paths.}
\autoref{fig:token_correction} illustrates how OPRD can exploit an informative teacher shift while allowing the stronger student to follow its own, more direct reasoning path rather than the one favored by the teacher. At the selected AIME\textquotesingle25 prefix, the student's top-1 prediction is \studenttoken{0}, which immediately completes \texttt{2016}. By contrast, the top-1 token under the weak policy shift $\mathbf d_t$ is \shifttoken{5}. Forcing \shifttoken{5} and continuing with the same student produces \texttt{252\,+\,504\,=\,756}, followed by \texttt{756\,+\,1260\,=\,2016}. This detour also reaches the correct result, showing that the teacher shift provides a valid direction that may be useful earlier in training. Here, however, the student can already complete the calculation directly. This is reflected in the highlighted \opposedtoken{0}, which has the most negative alignment with $\mathbf d_t$ among the displayed tokens. OPRD therefore raises the logit of \studenttoken{0} and lowers that of \shifttoken{5} (when $u_t<0$, OPRD amplifies the component of the verifier-driven policy gradient that opposes the teacher shift). The negative-alignment branch thus favors the student's shorter solution over the valid teacher-favored detour, providing a token-level example of how OPRD can move beyond the teacher.
\looseness=-1

\vspace{-10pt}
\paragraph{OPRD Remains Stylistically Closer to the Stronger Student.}
\autoref{fig:aime24_style_fingerprint} examines how teacher guidance affects response style on AIME\textquotesingle24. We summarize each method's average response style using 101 standardized features grouped into five categories: connectives, modality, grammar, punctuation, and sentence and paragraph structure. For each category, a normalized distance difference indicates whether the average style is closer to the teacher (negative) or the GRPO student at update 150 (positive) (see \appautoref{app:style} for details).
At update 150, OPD is closer to the teacher in all five categories, whereas OPRD is closer to the GRPO student. This pattern suggests that the OPRD student can benefit from what the teacher learned without inheriting its response style, consistent with using the teacher shift to rescale the student's own policy gradient rather than matching the teacher policy.
\looseness=-1

\section{Related Work}
\label{sec:related_work}

\paragraph{Weak-to-Strong Generalization.}
Weak-to-strong generalization has been observed across language understanding, reward modeling, and reasoning, although weak supervision typically recovers only part of the gap to strong supervision \citep{burns2024weaktostrong,yang2024weak}. Analyses attribute the gains to correcting weak pseudo-labels, extending coverage beyond the weak teacher, and differences between teacher and student hypothesis classes or representations \citep{lang2024theoretical,charikar2024quantifying,pmlr-v267-dong25g,xue2025representations,pmlr-v267-medvedev25a}, while naive fine-tuning can instead overfit weak errors \citep{somerstep2025a,yao-etal-2025-revisiting,shi-etal-2025-mitigate}. For reasoning, W2SR-P trains stronger students on verified weak-model trajectories, S2L-PO and related methods use weaker policies to broaden the student's rollouts, and weak critiques can generate and filter improved responses \citep{yuan-etal-2026-incentivizing,ren2026smaller,wang2026takes,jin2026weak}. These methods change the student's training data, exploration, or feedback, whereas OPRD leaves all three unchanged and only rescales the student's own policy gradient.
\looseness=-1

\vspace{-10pt}
\paragraph{On-Policy Distillation.}
Knowledge distillation for language generation has moved from matching teacher distributions on fixed or teacher-generated sequences \citep{hinton2015distilling,kim-rush-2016-sequence} to objectives evaluated on student-generated sequences \citep{gu2024minillm,ko2024distillm}. OPD makes this supervision fully on-policy by querying the teacher along the student's current rollouts, addressing the mismatch between the prefixes seen in training and those the student visits at inference \citep{agarwal2024policy}. It is now a common step in reasoning post-training \citep{yang2025qwen3,zeng2026glm}, and later work uses the same interface to consolidate several specialist teachers into one student \citep{ma2026mopd,team2026kimi,xiao2026mimo}, to exploit privileged information available only during training \citep{zhao2026selfdistilled,ye2026policy}, or to extrapolate the reward implicit in OPD beyond the teacher \citep{yang2026learning}. In all of these, the student is still trained to match a token distribution that the teacher defines, so in the weak-to-strong setting the optimum of the objective is the weak policy itself or a target derived from it.
\looseness=-1

\vspace{-10pt}
\paragraph{Distillation with Reinforcement Learning.}
Methods that combine distillation with verifier-based reinforcement learning differ in how the teacher signal enters optimization. KDRL and later work add a teacher-matching term to the reward objective \citep{xu2025kdrl,ramos2026recipe}. Others modify teacher guidance through policy ratios, reward-based selection, group-level calibration, or token-level interventions \citep{zhang2026reinforcement,akhondzadeh2026reward,zhang2026beyond,ko2026scaling,jia2026asymmetric}, and another uses a privileged self-teacher to control the magnitude of token-level credit \citep{wang2026teach}. However, a teacher-matching loss introduces a second objective that can compete with reward maximization when the teacher favors a solution the verifier does not reward. OPRD adds no such objective and optimizes reward alone.
\looseness=-1

\vspace{-10pt}
\paragraph{Transferring Policy Shifts.}
Several methods transfer the shift between a post-trained policy and its reference rather than the final policy alone. During decoding, this shift can steer a larger frozen model \citep{liu2024tuning,zhou2024weaktostrong}. During training, it has been used as an alignment target for a stronger model \citep{zhu2025weaktostrong}, as a proxy teacher built on the student's base policy in W2S-OPD \citep{yu2026weak}, and as a dense reward on student rollouts in Direct-OPD \citep{heo2026policy,feng2026weak}. In each case the shift itself becomes an optimization target, and it carries only the improvements the weak teacher realized. OPRD instead uses the weak policy delta to rescale the student's policy gradient, so the direction transfers without the shift becoming a target.
\looseness=-1

\vspace{-10pt}
\paragraph{Gradient Manipulation.}
Multi-task optimization combines objectives at the level of gradients rather than losses. Gradient surgery projects one task gradient onto the normal plane of another when the two conflict \citep{yu2020gradient}, a moving average of past gradients makes this projection more stable \citep{pmlr-v235-hsieh24a}, and auxiliary gradients can be gated by their cosine similarity with the main gradient \citep{du2019adapting,zhou2022on}. In these methods, every direction is the gradient of a loss the model itself optimizes, and conflicting components are removed or down-weighted. OPRD is closest to this family in form, but it removes nothing and only amplifies the component of the student's policy gradient that already points along the teacher direction, so the stationary points of the student objective in logit space do not move.
\looseness=-1

\section{Conclusion}
We introduce On-Policy Reverse Distillation (OPRD), which transfers a weak teacher's post-training policy shift by amplifying the aligned component of a stronger student's policy gradient without making the teacher policy an optimization target. By rescaling rather than replacing the student gradient, OPRD accelerates learning while preserving the policy objective's stationary points in logit space. Across successive model transfer and multi-domain consolidation, OPRD reaches teacher-level performance in substantially fewer updates than policy optimization alone and continues improving after on-policy distillation plateaus near the teacher. Its gains extend to strong-to-weak distillation, showing effectiveness under both capacity orderings. Qualitatively, the OPRD student's response style remains closer to the reward-only baseline than to the teacher, consistent with the shift being expressed through the student's own policy rather than imitation. OPRD thus enables efficient transfer from smaller specialists without defining the student's optimization target or limiting its performance.
\looseness=-1

\subsection{Future Works}

\paragraph{Broader Tasks and Settings.} 
Mathematical and logical reasoning offer controlled settings in which verifier feedback and policy improvement can be measured directly. Broader evaluations should test whether OPRD continues to transfer useful policy shifts under different forms of feedback and interaction. Code generation and agentic environments are particularly informative because feedback arises from program execution or environmental responses, and early actions influence subsequent observations and rewards. These settings would clarify how broadly policy changes learned through post-training can be transferred between models.
\looseness=-1

\vspace{-10pt}
\paragraph{Scaling to Larger Models.} 
Our results cover Qwen3 models from 0.6B to 8B parameters and both weak-to-strong and strong-to-weak capacity orderings. At larger scales, OPRD may be especially useful because learning a policy shift with a smaller model could be substantially cheaper than optimizing the larger model directly from verifier feedback. Larger-scale experiments would test how students with greater capacity use the same teacher shift, how far they can improve beyond the teacher, and whether the gains in update efficiency persist as post-training costs increase.
\looseness=-1

\vspace{-10pt}
\paragraph{Systems Considerations at Scale.} 
OPRD adds no-gradient forward passes through the frozen teacher and reference policies and a correction of the student's logit gradient. In weak-to-strong setup, both frozen policies are smaller than the student and require neither generation nor backward propagation, while the correction retains one additional dense logit-gradient tensor. As shown in \appautoref{app:cost_memory}, these additions only increase wall-clock time by 11.9\% and peak GPU memory by 10.2\% relative to GRPO. But at frontier-model scale, keeping this overhead modest will require efficient placement, sharding, and scheduling of the frozen policies within hybrid parallelism, together with communication-efficient correction across model and vocabulary shards.
\looseness=-1

\vspace{-10pt}
\paragraph{Toward Recursive Self-Improvement.}
An important direction for future work is to connect weak-to-strong distillation with recursive self-improvement, where each model generation contributes to the development of more capable successors through training supervision, evaluation, and algorithmic improvements. These successors, in turn, use their greater capabilities to improve subsequent model development. For example, earlier models helped supervise GPT-6 Astra's training \citep{openai2026gpt6astra}, while Google reports using agentic loops to recursively evaluate and refine Gemini 3.8 Flash \citep{google2026gemini38flash}. A promising extension is to incorporate reverse distillation into these workflows, allowing earlier generations to contribute not only to training supervision and development but also directly to their successors' policy updates through their post-training policy shifts. Building such pipelines would allow us to test whether reverse distillation can consistently improve sample efficiency and accelerate training as each successor becomes a teacher for the next generation.
\looseness=-1

\section*{Acknowledgements}
We thank Kee-Eung Kim for facilitating access to computational resources through the National AI Research Hub project. We thank Rishabh Agarwal for helpful discussions on related work and algorithm design. We also thank Reza Bayat for feedback on the manuscript.

\clearpage
\bibliographystyle{plainnat}
\bibliography{paper}

\clearpage
\appendix{\setlength{\cftbeforesecskip}{16pt}
\setlength{\cftbeforesubsecskip}{4pt}
\renewcommand\cftsecpagefont{\color{RoyalBlue}}
\renewcommand\cftsubsecpagefont{\color{RoyalBlue}}
\renewcommand\cftsubsubsecpagefont{\color{RoyalBlue}}
\renewcommand{\contentsname}{\large{Contents}}
{
  \hypersetup{linkcolor=}
  \tableofcontents
}

\clearpage


\section{Optimization Properties of Teacher-Direction Scaling}
\label{app:oprd-convergence}

\noindent OPRD multiplies the token-level policy gradient by $\mathbf I+\lambda_t\mathbf d_t\mathbf d_t^\top$, which amplifies the component along $\mathbf d_t$ by $1+\lambda_t$ and leaves the orthogonal component unchanged. Over a full response, the resulting update vanishes exactly where the unscaled GRPO update does, and its one-step ascent bound gains a nonnegative term.
\looseness=-1

\noindent Fix a response $y$ with $T$ valid tokens and let $\mathbf z=(\mathbf z_1,\ldots,\mathbf z_T)$ collect its next-token logits, with $\pi(\cdot\mid\mathbf z_t)=\operatorname{softmax}(\mathbf z_t)$. The advantages $A_t$ and the directions $\mathbf d_t$ do not depend on $\mathbf z$, and the token gradient in \eqautoref{eq:rlvr_token_gradient} is the gradient of the objective for this response,
\begin{equation}
J(\mathbf z)=\sum_{t=1}^{T}A_t\log\pi(y_t\mid\mathbf z_t),
\qquad
\mathbf g_t=\nabla_{\mathbf z_t}J(\mathbf z).
\label{eq:oprd-response-objective}
\end{equation}
Each block of the Hessian of $J$ is $A_t$ times that of $\log\pi(y_t\mid\mathbf z_t)$, whose eigenvalues lie in $[-\tfrac12,0]$, so $J$ is $L$-smooth with $L=\max_t|A_t|/2$.

\begin{proposition}[Stationarity and One-Step Ascent]
\label{prop:oprd-beyond-teacher}
Let $\mathbf z_k$ be the current logits and write $\mathbf g_t=\nabla_{\mathbf z_t}J(\mathbf z_k)$ and $u_t=\mathbf d_t^\top\mathbf g_t$ for the alignment coefficient, with $\lVert\mathbf d_t\rVert_2=1$ and $\lambda_t\geq0$ fixed for this step. With step size $\eta>0$, set
\begin{equation}
\widetilde{\mathbf g}_t=(\mathbf I+\lambda_t\mathbf d_t\mathbf d_t^\top)\mathbf g_t,
\qquad
\mathbf z_{k+1,t}=\mathbf z_{k,t}+\eta\,\widetilde{\mathbf g}_t.
\label{eq:oprd-response-logit-update}
\end{equation}
Then $\widetilde{\mathbf g}_t=\mathbf 0$ for every $t$ if and only if $\nabla_{\mathbf z}J(\mathbf z_k)=\mathbf 0$. If in addition $\eta L(1+\bar\lambda)\leq1$ with $\bar\lambda=\max_t\lambda_t$,
\begin{equation}
J(\mathbf z_{k+1})-J(\mathbf z_k)
\;\geq\;
\frac{\eta}{2}\lVert\nabla_{\mathbf z}J(\mathbf z_k)\rVert_2^2
+\frac{\eta}{2}\sum_{t}\lambda_tu_t^2 .
\label{eq:oprd-contraction}
\end{equation}
\end{proposition}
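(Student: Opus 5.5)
The plan has two parts. Stationarity comes from invertibility of the per-token matrix, and the ascent bound comes from the standard descent lemma for $L$-smooth functions applied to $J$.

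For stationarity, let $\mathbf M_t:=\mathbf I+\lambda_t\mathbf d_t\mathbf d_t^\top$. This matrix is symmetric. Its eigenvalue is $1+\lambda_t$ along $\mathbf d_t$ and $1$ on $\mathbf d_t^\perp$, so since $\lambda_t\ge 0$ every eigenvalue lies in $[1,1+\lambda_t]$ and $\mathbf M_t$ is positive definite, hence invertible. Therefore $\widetilde{\mathbf g}_t=\mathbf M_t\mathbf g_t=\mathbf 0$ exactly when $\mathbf g_t=\mathbf 0$. Because $\nabla_{\mathbf z}J(\mathbf z_k)$ is the concatenation of the blocks $\mathbf g_t$, requiring this for every $t$ gives the equivalence with $\nabla_{\mathbf z}J(\mathbf z_k)=\mathbf 0$.

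For the ascent bound, I use $L$-smoothness of $J$, which was established just before the proposition from the block Hessian eigenvalues in $[-\tfrac12,0]$ scaled by $A_t$. The quadratic lower bound $J(\mathbf z+\mathbf h)\ge J(\mathbf z)+\langle\nabla J,\mathbf h\rangle-\tfrac L2\|\mathbf h\|^2$ applied with $\mathbf h=\eta\widetilde{\mathbf g}$ gives
\[
J(\mathbf z_{k+1})-J(\mathbf z_k)\ \ge\ \eta\sum_t\langle\mathbf g_t,\widetilde{\mathbf g}_t\rangle-\frac{L\eta^2}{2}\sum_t\|\widetilde{\mathbf g}_t\|^2 .
\]
I then compute both terms blockwise. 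The first is the alignment-gain identity \eqautoref{eq:oprd_alignment_gain}, $\langle\mathbf g_t,\widetilde{\mathbf g}_t\rangle=\|\mathbf g_t\|^2+\lambda_tu_t^2$. For the second, decompose $\mathbf g_t=u_t\mathbf d_t+\mathbf g_t^\perp$, which gives
\[
\|\widetilde{\mathbf g}_t\|^2=(1+\lambda_t)^2u_t^2+\|\mathbf g_t^\perp\|^2=\|\mathbf g_t\|^2+(2\lambda_t+\lambda_t^2)u_t^2 .
\]

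The main obstacle is controlling this quadratic term so that half of each linear term survives. I would bound it by $(1+\lambda_t)$ times the inner product:
\[
\|\mathbf g_t\|^2+(2\lambda_t+\lambda_t^2)u_t^2\ \le\ (1+\lambda_t)\bigl(\|\mathbf g_t\|^2+\lambda_tu_t^2\bigr).
\]
Subtracting the left side from the right leaves $\lambda_t\|\mathbf g_t\|^2-\lambda_tu_t^2=\lambda_t\|\mathbf g_t^\perp\|^2\ge0$, so the inequality holds. With $\eta L(1+\bar\lambda)\le1$, the quadratic term is then at most $\tfrac\eta2\sum_t\langle\mathbf g_t,\widetilde{\mathbf g}_t\rangle$. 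Subtracting it from $\eta\sum_t\langle\mathbf g_t,\widetilde{\mathbf g}_t\rangle$ leaves $\tfrac\eta2\sum_t(\|\mathbf g_t\|^2+\lambda_tu_t^2)$, which is exactly \eqref{eq:oprd-contraction}.
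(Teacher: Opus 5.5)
Your proposal is correct and follows essentially the same route as the paper. You get stationarity from positive definiteness of each block $\mathbf I+\lambda_t\mathbf d_t\mathbf d_t^\top$, and the ascent bound from the $L$-smooth quadratic lower bound; your explicit blockwise check $\lVert\widetilde{\mathbf g}_t\rVert_2^2\le(1+\lambda_t)\langle\mathbf g_t,\widetilde{\mathbf g}_t\rangle$ is just the paper's operator inequality $\mathbf P^2\preceq(1+\bar\lambda)\mathbf P$ evaluated at $\mathbf g$.
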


\begin{proof}
Let $\mathbf g=\nabla_{\mathbf z}J(\mathbf z_k)$ and let $\mathbf P$ be the block-diagonal matrix with blocks $\mathbf I+\lambda_t\mathbf d_t\mathbf d_t^\top$, so that $\mathbf z_{k+1}=\mathbf z_k+\eta\mathbf P\mathbf g$. Each block has eigenvalue $1+\lambda_t$ along $\mathbf d_t$ and $1$ on the orthogonal complement. Hence $\mathbf P$ is positive definite and therefore invertible, which gives the first claim, and
\[
\mathbf g^\top\mathbf P\mathbf g=\lVert\mathbf g\rVert_2^2+\sum_t\lambda_tu_t^2,
\qquad
\mathbf P^2\preceq(1+\bar\lambda)\mathbf P .
\]
By $L$-smoothness,
\[
\begin{aligned}
J(\mathbf z_{k+1})
&\geq
J(\mathbf z_k)+\eta\,\mathbf g^\top\mathbf P\mathbf g-\frac{L\eta^2}{2}\,\mathbf g^\top\mathbf P^2\mathbf g
\\
&\geq
J(\mathbf z_k)+\eta\Big(1-\frac{L\eta(1+\bar\lambda)}{2}\Big)\mathbf g^\top\mathbf P\mathbf g
\\
&\geq
J(\mathbf z_k)+\frac{\eta}{2}\,\mathbf g^\top\mathbf P\mathbf g
\\
&=
J(\mathbf z_k)+\frac{\eta}{2}\lVert\mathbf g\rVert_2^2+\frac{\eta}{2}\sum_t\lambda_tu_t^2 .
\end{aligned}
\]
\end{proof}

\noindent Setting $\lambda_t=0$ in \eqautoref{eq:oprd-contraction} recovers the bound $\tfrac{\eta}{2}\lVert\nabla_{\mathbf z}J(\mathbf z_k)\rVert_2^2$ of an unscaled step, so the second term is what scaling adds. It grows with the component of the verifier-driven policy gradient along the teacher direction and disappears when the two are orthogonal at every token. Scaling therefore adds to the progress guaranteed at each step without changing where the update vanishes, and the price is the tighter condition $\eta L(1+\bar\lambda)\leq1$ on the step size, since the scaled update is longer. The gain depends on $u_t^2$, so alignments of equal magnitude contribute equally whether the student follows or opposes the teacher. \appautoref{app:alignment_amplification} analyzes what changes when the two branches use different scales.
\looseness=-1

\clearpage

\section{Analysis of Asymmetric Alignment Scaling}
\label{app:asymetric_alignment}

\subsection{One-Sided Amplification under Positive-Only Scaling}
\label{app:alignment_amplification}

Positive-only amplification is locally well motivated. When $u_t>0$, the component of the sampled student gradient $\mathbf g_t$ along the teacher-derived direction $\mathbf d_t$ follows the teacher's post-training shift. Amplifying this component therefore reinforces an update supported by both the teacher shift and the verifier-driven student gradient. A related positive-gating rule is used by \citet{du2019adapting}, who weight auxiliary updates by the positive part of their gradient cosine similarity.
\looseness=-1

However, applying different scales to the two alignment signs introduces a one-sided effect. Let $\lambda_+$ and $\lambda_-$ denote the scales applied when $u_t\geq0$ and $u_t<0$, respectively. The coefficient multiplying $\mathbf d_t$ in the added correction is
\begin{equation}
\begin{aligned}
c_t
&:=
\lambda_+ u_t \mathbf{1}\{u_t \geq 0\}
+
\lambda_- u_t \mathbf{1}\{u_t < 0\}
\\
&=
\tfrac{\lambda_+ + \lambda_-}{2}\,u_t
+
\tfrac{\lambda_+ - \lambda_-}{2}\,
\lvert u_t\rvert.
\end{aligned}
\end{equation}
This decomposition separates sign-symmetric scaling from the asymmetry introduced by using different scales for the two branches. The first term symmetrically scales $u_t$ by the average of the two branch scales. Because it preserves the sign of $u_t$, positive and negative contributions can cancel across tokens. The second term depends on $\lvert u_t\rvert$ and appears only when the branch scales differ. In particular, when $\lambda_+>\lambda_-$, this term remains nonnegative for either sign of $u_t$. It therefore cannot be canceled by changes in the alignment sign, leaving a one-sided coefficient on the local teacher direction.
\looseness=-1

Under positive-only scaling, $\lambda_-=0$. If positive and negative alignments nearly balance across sampled tokens and rollouts, such that $\mathbb E[u_t]\approx0$, then
\begin{equation}
\begin{aligned}
\mathbb E[c_t]
&=
\tfrac{\lambda_+}{2}
\left(
\mathbb E[u_t]
+
\mathbb E[\lvert u_t\rvert]
\right)
\\
&\approx
\tfrac{\lambda_+}{2}\,
\mathbb E[\lvert u_t\rvert]
>
0.
\end{aligned}
\end{equation}
Thus, even when the signed alignments cancel on average, the scalar coefficient $c_t$ remains positive on average under positive-only scaling. This residual coefficient is governed by the mean alignment magnitude $\mathbb E[\lvert u_t\rvert]$, rather than the small signed mean $\mathbb E[u_t]$.
\looseness=-1

Importantly, this residual amplification need not reflect only reward-relevant teacher progress. The sign of $u_t$ reveals whether $\mathbf d_t$ and $\mathbf g_t$ agree, but not why they agree. At an individual sampled token, we write $\mathbf g_t=\mathbf g_t^\star+\boldsymbol{\epsilon}_t$, where $\mathbf g_t^\star$ denotes the underlying reward-improving signal and $\boldsymbol{\epsilon}_t$ aggregates incidental or misattributed components arising from coarse response-level credit assignment, rollout and mini-batch sampling, and estimator-specific effects that need not correspond to actions responsible for higher reward. Similarly, $\mathbf d_t$ captures all changes induced by teacher post-training, including both reward-relevant progress and incidental behavioral changes. Positive alignment may therefore arise from either useful teacher-acquired progress or an incidental tendency shared by the two vectors. Positive-only scaling cannot distinguish between these cases and amplifies the aligned component regardless of its source.
\looseness=-1

Response length provides one concrete example of such a shared tendency. In reasoning tasks, higher verifier rewards are often associated with longer reasoning traces, so the student gradient $\mathbf g_t$ may favor token-level changes that prolong generation. The teacher direction $\mathbf d_t$ may encode a similar tendency acquired during teacher post-training. This tendency may represent useful additional reasoning, but it may also reflect length-dependent effects in the policy-gradient estimate \citep{liu2025understanding}. When it is shared by both signals, positive-only scaling amplifies it whenever it produces positive alignment.
\looseness=-1

\clearpage
\subsection{Isolating the Positive and Negative Alignment Branches}
\label{app:isolating_pos_neg_align}

To examine the branch-specific effects, we isolate the two alignment branches by activating gradient scaling only when $u_t\geq0$ (\emph{positive-only}) or only when $u_t<0$ (\emph{negative-only}), while holding all other training settings fixed within each task. \autoref{fig:asymmetric_scaling} reports evaluation performance and response length during training on mathematics and Knights \& Knaves tasks. Across both tasks, positive-only scaling produces rapid early gains accompanied by a sharp increase in response length. Performance then begins to decline as responses grow toward the generation limit. Negative-only scaling exhibits the opposite pattern: responses become shorter, while performance quickly falls to near zero.
\looseness=-1


\begin{figure*}[!h]
\vspace{5pt}
\centering

\begingroup

\captionsetup[subfigure]{
  font=small,
  labelfont=normalfont,
  justification=centering,
  singlelinecheck=true,
  skip=3pt
}


\includegraphics[
    width=0.95\textwidth
]{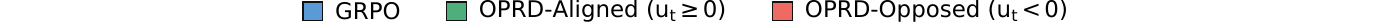}

\par\vspace{3pt}


\begin{subfigure}[t]{0.492\textwidth}
\centering

\includegraphics[
    width=0.49\linewidth
]{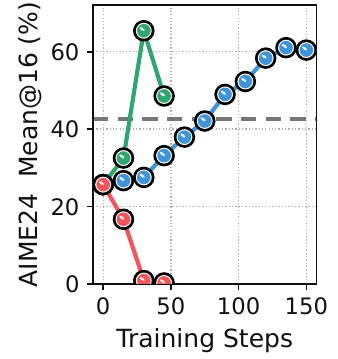}
\hfill
\includegraphics[
    width=0.49\linewidth
]{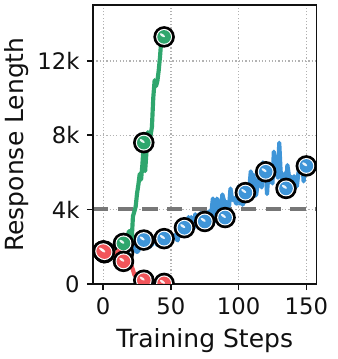}

\vspace{2pt}
\caption{Math}
\label{fig:asymmetric_scaling_math}
\end{subfigure}
\hfill
\begin{subfigure}[t]{0.492\textwidth}
\centering

\includegraphics[
    width=0.49\linewidth
]{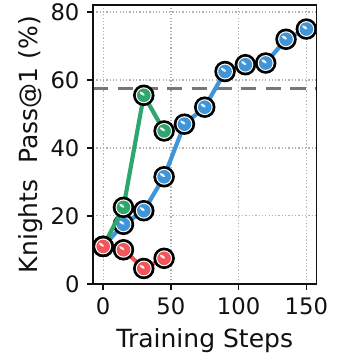}
\hfill
\includegraphics[
    width=0.49\linewidth
]{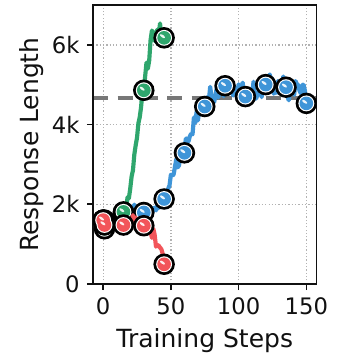}

\vspace{2pt}
\caption{Knights \& Knaves}
\label{fig:asymmetric_scaling_reasoning_gym}
\end{subfigure}


\caption{
\textbf{Evaluation performance and response length over training for OPRD variants with only the positive- or negative-alignment branch active.}
For Math and Knights \& Knaves, we use Qwen3-4B and Qwen3-4B-Base teacher checkpoints obtained after 75 and 105 RL training steps, respectively. The two single-branch OPRD variants are trained for 45 steps with $\lambda=0.5$, while GRPO is shown through 150 steps for reference. The maximum generation lengths are 20K and 8K tokens for the two settings, respectively. The gray dashed lines denote the performance or response length of the corresponding weak teachers. All other training settings follow the dataset-specific default configurations described in \autoref{app:exp_details}.
\looseness=-1
}
\label{fig:asymmetric_scaling}

\endgroup

\vspace{10pt}
\end{figure*}

The rapid gains under positive-only scaling suggest that teacher-aligned components provide effective early transfer of the progress acquired during teacher post-training. By contrast, the collapse under negative-only scaling suggests that teacher-opposing components are less reliable early in training, when the student's rollouts remain weak. Because the verifier provides only response-level feedback, even a rewarded trajectory may contain locally unhelpful token choices whose gradients are negatively aligned with the teacher shift. Applying negative-branch scaling at full strength from the outset can therefore reinforce unreliable token-level updates.
\looseness=-1

The response-length dynamics are also consistent with the shared tendency discussed in \appautoref{app:alignment_amplification}. In both tasks, performance improvements under GRPO are accompanied by longer reasoning traces, suggesting that the student gradient $\mathbf g_t$ favors token-level changes that prolong generation. The teacher develops a similar tendency during RL post-training, which may be encoded in $\mathbf d_t$. Positive-only scaling reinforces this shared tendency and rapidly drives responses toward the generation limit. Negative-only scaling instead amplifies student-gradient components that oppose $\mathbf d_t$, counteracting the length-increasing tendency and producing shorter responses.
\looseness=-1

\clearpage
\subsection{Mitigating One-Sided Amplification through Branch Scheduling}
\label{app:alignment_schedules}

The isolated-branch results suggest that the positive and negative branches play complementary roles over training. The positive branch amplifies components supported by both the teacher shift and the verifier-driven student gradient, thereby providing rapid early transfer. The negative branch instead amplifies verifier-supported departures from the teacher direction, which may help the stronger student move beyond the weak teacher. However, these departures are less reliable early in training, when the student's on-policy rollouts remain weak. This difference motivates controlling the relative strengths of the two branches over training.
\looseness=-1

We compare three strategies for avoiding persistent one-sided amplification. Under the default OPRD schedule, $\lambda_+$ remains fixed at $\lambda$, while $\lambda_-$ gradually increases from $0$ to $\lambda$. Early in training, the larger positive-branch scale prioritizes teacher-aligned components and provides an effective bootstrap. As $\lambda_-$ increases, verifier-supported gradient components whose projections oppose the teacher direction receive progressively greater amplification. Once $\lambda_-=\lambda_+=\lambda$, the asymmetric term proportional to $\lvert u_t\rvert$ vanishes and the correction coefficient reduces to $c_t=\lambda u_t$. The schedule thus preserves rapid teacher-aligned transfer early in training while gradually introducing stronger departures from the weak teacher.
\looseness=-1

Alternatively, we keep $\lambda_-=0$ and gradually decrease $\lambda_+$ from $\lambda$ to $0$. This schedule likewise uses positive-branch scaling as an early bootstrap but progressively removes the added teacher-direction correction. Once $\lambda_+=0$, both branch scales are zero, so the transformed gradient reduces to the original verifier-driven policy gradient and training returns to GRPO. As a schedule-free alternative, we also consider fixed symmetric scaling, which sets $\lambda_+=\lambda_-=\lambda$ throughout training. This removes one-sided amplification from the outset but activates the distinct effects of both branches simultaneously.
\looseness=-1


\begin{figure*}[!h]
\centering

\begingroup

\captionsetup[subfigure]{
  font=small,
  labelfont=normalfont,
  justification=centering,
  singlelinecheck=true,
  skip=7pt
}


\includegraphics[
    width=0.8\textwidth
]{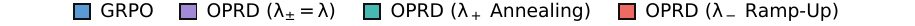}

\par\vspace{3pt}


\begin{subfigure}[t]{0.32\textwidth}
\centering
\includegraphics[
    width=\linewidth
]{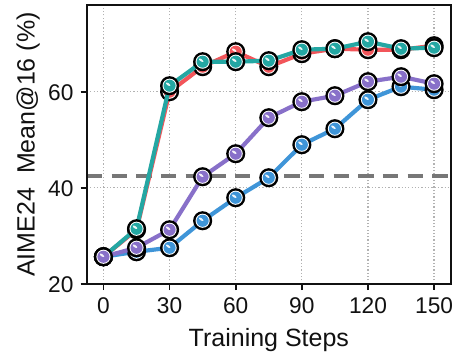}

\caption{Math}
\label{fig:branch_scheduling_math}
\end{subfigure}
\hspace{0.025\textwidth}
\begin{subfigure}[t]{0.32\textwidth}
\centering
\includegraphics[
    width=\linewidth
]{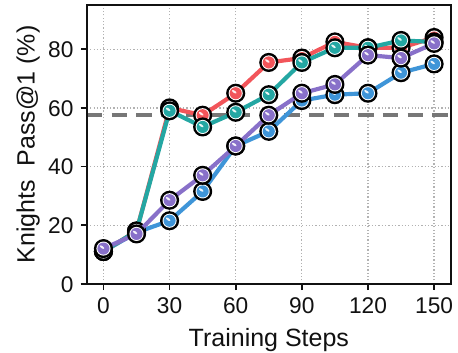}

\caption{Knights \& Knaves}
\label{fig:branch_scheduling_knights}
\end{subfigure}

\caption{
\textbf{Comparison of three branch-scheduling strategies.}
We compare the default $\lambda_-$ ramp-up ($\lambda_+=0.5$, $\lambda_-:0\rightarrow0.5$), $\lambda_+$ annealing ($\lambda_+:0.5 \rightarrow 0$, $\lambda_-=0$), and fixed symmetric scaling ($\lambda_+=\lambda_-=1.0$) against GRPO. The two scheduled variants use horizons of 30 updates for Math and 75 updates for Knights \& Knaves.
We use Qwen3-4B and Qwen3-4B-Base teacher checkpoints obtained after 75 and 105 RL training steps, respectively. Gray dashed lines denote teacher performance. All other training configurations follow \appautoref{app:exp_details}.
\looseness=-1
}
\label{fig:branch_scheduling}

\endgroup

\vspace{6pt}
\end{figure*}

As shown in \autoref{fig:branch_scheduling}, the two scheduled variants begin with positive-only amplification and achieve rapid early gains, whereas fixed symmetric scaling improves much more slowly despite using $\lambda=1.0$: it only gradually breaks through on Math and yields limited early gains on Knights \& Knaves. Because response-level feedback can reward trajectories containing locally incorrect or incidental steps, the resulting $u_t<0$ components are less reliable on weak early rollouts and can dampen the positive-branch bootstrap when amplified from the outset. Activating only $\lambda_+$ is therefore the more reliable default for early acceleration.
\looseness=-1

The later acceleration of fixed symmetric scaling on Knights \& Knaves suggests that $\lambda_-$ becomes useful once the student reaches a stronger regime and produces more informative on-policy rollouts. At this stage, it can amplify meaningful verifier-supported departures discovered through the student's own rollouts, helping it move beyond the weak teacher. Although only $\lambda_+$ annealing shows that returning to verifier-only optimization after the initial bootstrap is also viable, it forgoes explicit amplification of these student-discovered departures. We therefore adopt $\lambda_-$ ramp-up as the default: it preserves the early acceleration from $\lambda_+$ while introducing $\lambda_-$ later to remove persistent one-sided amplification and support progress beyond the weak teacher.
\looseness=-1


\clearpage
\section{Training and Evaluation Details}
\label{app:exp_details}

\autoref{tab:shared_training_details} and \autoref{tab:method_training_details} summarize the default training settings and method-specific configurations for GRPO, OPD, KDRL, and OPRD.
Scenario-specific settings are provided in their respective Appendix sections. 
We train all models on four NVIDIA B200 GPUs using Fully Sharded Data Parallel (FSDP) \citep{zhao2023pytorch}.

\begin{table*}[!h]
\caption{
\textbf{Default training settings for Math and Reasoning Gym.}
These configurations are shared across all methods. Method-specific settings are provided in \autoref{tab:method_training_details}.
}
\label{tab:shared_training_details}
\centering

\begingroup
\small
\setlength{\tabcolsep}{5pt}
\renewcommand{\arraystretch}{1.25}

\begin{tabularx}{\linewidth}{
@{}
>{\raggedright\arraybackslash}p{0.22\linewidth}
>{\raggedright\arraybackslash}X
>{\raggedright\arraybackslash}X
@{}
}
\toprule

\textbf{Settings}
& \multicolumn{1}{c}{\textbf{Math}}
& \multicolumn{1}{c}{\textbf{Reasoning Gym}} \\
\midrule

\rowcolor{blue!8}
\multicolumn{3}{@{}l}{\textbf{Data and Models}} \\

Training data
& DAPO-Math-17K
& Knights \& Knaves, Quantum Lock, String Manipulation, and Countdown
\newline
(19,800 examples per task) \\

Prompt format
& Chat template with a system prompt
& Chat template without a system prompt \\

Student policy
& Qwen3-8B (non-thinking)
& Qwen3-8B-Base (non-thinking) \\

Teacher policy
& Qwen3-4B (step 75, non-thinking)
& Qwen3-4B-Base (step 75 for String task,\newline step 105 for the other tasks, non-thinking) \\

\midrule

\rowcolor{blue!8}
\multicolumn{3}{@{}l}{\textbf{Optimization}} \\

Training horizon
& 150 policy updates
& 150 policy updates \\

Prompt batch / mini-batch
& 64 / 64
& 64 / 32 \\

Rollouts per prompt
& 8
& 8 \\

Optimizer
& AdamW, $\beta=(0.9,0.999)$, weight decay $0.01$, gradient clipping $1.0$
& AdamW, $\beta=(0.9,0.999)$, weight decay $0.01$, gradient clipping $1.0$ \\

Learning rate
& $1\times10^{-6}$ (constant schedule with \newline10 warm-up updates)
& $1\times10^{-6}$ (constant schedule with \newline10 warm-up updates) \\

Policy optimization
& PPO clipping range $[0.20,0.28]$, \newline
no standard-deviation normalization, \newline
no KL or entropy regularization
& PPO clipping range $[0.20,0.28]$, \newline
no standard-deviation normalization, \newline
no KL or entropy regularization \\

\midrule

\rowcolor{blue!8}
\multicolumn{3}{@{}l}{\textbf{Generation}} \\

Training-time decoding
& Temperature $1.0$, top-$p$ $1.0$, no top-$k$
& Temperature $1.0$, top-$p$ $1.0$, no top-$k$ \\

Maximum prompt length
& 2,048 tokens
& 2,048 tokens \\

Maximum response length
& 20,480 tokens
& 8,192 tokens \\

Length-based reward
& No penalty up to 16,384 tokens, then\newline linear penalty
reaching $-1$ at 20,480 tokens
& -- \\

\bottomrule
\end{tabularx}

\endgroup
\vspace{10pt}
\end{table*}

\begin{table*}[!h]
\caption{
\textbf{Method-specific training settings.}
All distillation-based methods use the same task-specific frozen teacher
checkpoint specified in \autoref{tab:shared_training_details}.
}
\label{tab:method_training_details}
\centering

\begingroup
\small
\setlength{\tabcolsep}{-2pt}
\renewcommand{\arraystretch}{1.25}

\begin{tabularx}{\linewidth}{
@{}
>{\raggedright\arraybackslash}p{0.170\linewidth}
>{\centering\arraybackslash}p{0.080\linewidth}
>{\raggedright\arraybackslash}X
>{\raggedright\arraybackslash}p{0.28\linewidth}
>{\raggedright\arraybackslash}p{0.28\linewidth}
@{}
}
\toprule

\textbf{Settings}
& \textbf{GRPO}
& \multicolumn{1}{c}{\textbf{OPD}}
& \multicolumn{1}{c}{\textbf{KDRL}}
& \multicolumn{1}{c}{\textbf{OPRD}} \\
\midrule

\rowcolor{blue!8}
\multicolumn{5}{@{}l}{\textbf{Optimization}} \\

Frozen teacher
& --
& Task-specific
& Task-specific
& Task-specific \\

Teacher reference
& --
& --
& --
& Raw Qwen3-4B family \\

Teacher signal
& --
& Teacher--student\newline log-probability ratio
& K2 signal
& Teacher-shift direction $\mathbf d_t$ \\

Teacher temperature
& --
& $1.0$
& $1.0$
& $1.0$ \\

Token support
& --
& Sampled tokens
& Sampled tokens
& Sampled $\cup$ student top-10 \\

Coefficient schedule
& --
& Fixed at $1.0$
& $\beta_k: 0.005 \rightarrow 0$ over
\newline Math: 30 updates
\newline Reasoning Gym: 75 updates
& $\lambda_+=0.5$, $\lambda_-: 0 \rightarrow 0.5$ over
\newline Math: 30 updates
\newline Reasoning Gym: 75 updates \\

\bottomrule
\end{tabularx}

\endgroup
\end{table*}

\clearpage

\autoref{tab:evaluation_details} summarizes the default evaluation settings for Math and Reasoning Gym. Across methods, all trained policies are evaluated using the same
task-specific settings.
\looseness=-1

\begin{table*}[!h]
\caption{
\textbf{Default evaluation settings for Math and Reasoning Gym.}
Teacher and initial-student checkpoints use the same decoding and scoring protocols as trained student checkpoints.
}
\label{tab:evaluation_details}
\centering

\begingroup
\small
\setlength{\tabcolsep}{2pt}
\renewcommand{\arraystretch}{1.3}

\begin{tabularx}{\linewidth}{
@{}
>{\raggedright\arraybackslash}p{0.220\linewidth}
>{\raggedright\arraybackslash}p{0.340\linewidth}
>{\raggedright\arraybackslash}X
@{}
}
\toprule

\textbf{Settings}
& \multicolumn{1}{c}{\textbf{Math}}
& \multicolumn{1}{c}{\textbf{Reasoning Gym}} \\
\midrule

\rowcolor{blue!8}
\multicolumn{3}{@{}l}{\textbf{Benchmarks and Metrics}} \\

Reported benchmarks
& AIME\textquotesingle24, AIME\textquotesingle25,
HMMT\textquotesingle25, OlympiadBench
& Knights \& Knaves, Quantum Lock,\newline String Manipulation, Countdown
\newline (200 examples per task) 
\\

Evaluation metric
& Mean@16
& Pass@1 \\

Rollouts per problem
& 16
& 1 \\

\midrule

\rowcolor{blue!8}
\multicolumn{3}{@{}l}{\textbf{Decoding}} \\

Decoding parameters
& Temperature $0.7$, top-$p$ $0.8$, top-$k$ $20$
& Temperature $0.6$, top-$p$ $0.95$, top-$k$ $20$ \\

Maximum prompt length
& 2,048 tokens
& 2,048 tokens \\

Maximum response length
& 38,912 tokens
& 8,192 tokens \\

\midrule

\rowcolor{blue!8}
\multicolumn{3}{@{}l}{\textbf{Scoring and Reporting}} \\

Scoring
& Exact match after answer normalization
& Nonempty boxed answer required,
\newline K\&K: exact match after normalization,
\newline Quantum Lock: $1.0$ for a reference-length valid path, $0.5$ for any other valid path, $0$ otherwise,
\newline String Manipulation: case-sensitive exact match,
\newline Countdown: valid expression using each given number exactly once and reaching the target
\\

Checkpoint averaging
& 5-checkpoint mean (30-update intervals)
& 5-checkpoint mean (30-update intervals) \\

\bottomrule
\end{tabularx}

\endgroup
\end{table*}

\clearpage
\section{Detailed Results for Weak-to-Strong Model Transfer}
\label{app:weak_to_strong_distill}

\subsection{Detailed Learning Curves}
\label{app:detailed_curve_successive_model}

We study successive model transfer within the Qwen3 family \citep{yang2025qwen3}, using GRPO-trained 4B-scale models as teachers to accelerate the post-training of larger 8B-scale students. We select intermediate teacher checkpoints whose the performance exceeds that of the initial student but remains below the student's end-of-training GRPO performance. We also considered cross-generation transfer from Qwen2.5 \citep{qwen2025qwen25technicalreport} to Qwen3. In preliminary experiments, however, the Qwen2.5-3B and 7B checkpoints remain below this target range (around 14\% on AIME\textquotesingle24), while obtaining suitable post-trained checkpoints and the corresponding teacher-shift signals would require substantially more compute. We therefore focus on controlled within-family transfer.
\looseness=-1

We compare OPRD with GRPO, OPD, and KDRL on four mathematics benchmarks and four Reasoning Gym tasks, reporting Mean@16 and Pass@1, respectively. \autoref{fig:overview} aggregates performance across benchmarks, whereas \autoref{tab:main_results} averages each trained policy over five checkpoints. \autoref{fig:successor_math_learning_curves} and \autoref{fig:successor_reasoning_learning_curves} show the corresponding benchmark-level learning curves. Across these benchmarks, OPRD generally retains OPD's rapid early improvement. Unlike OPD, which plateaus near the weak teacher, OPRD continues to improve beyond it and reaches GRPO's end-of-training performance substantially earlier.
\looseness=-1


\begin{figure*}[!h]
\vspace{3pt}
\centering

\begingroup

\captionsetup[subfigure]{
  font=small,
  labelfont=normalfont,
  justification=centering,
  singlelinecheck=true,
  skip=0pt
}


\includegraphics[
    width=0.41\textwidth
]{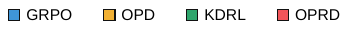}

\par\vspace{3pt}


\begin{subfigure}[t]{0.261\textwidth}
\centering
\includegraphics[
    width=\linewidth
]{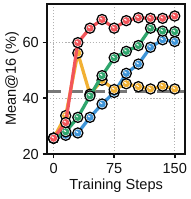}
\vspace{-10pt}
\caption{AIME\textquotesingle24}
\label{fig:successor_curve_aime24}
\end{subfigure}
\hfill
\begin{subfigure}[t]{0.235\textwidth}
\centering
\includegraphics[
    width=\linewidth
]{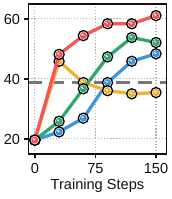}
\vspace{-10pt}
\caption{AIME\textquotesingle25}
\label{fig:successor_curve_aime25}
\end{subfigure}
\hfill
\begin{subfigure}[t]{0.235\textwidth}
\centering
\includegraphics[
    width=\linewidth
]{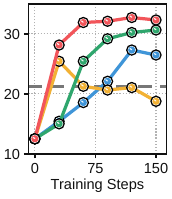}
\vspace{-10pt}
\caption{HMMT\textquotesingle25}
\label{fig:successor_curve_hmmt25}
\end{subfigure}
\hfill
\begin{subfigure}[t]{0.235\textwidth}
\centering
\includegraphics[
    width=\linewidth
]{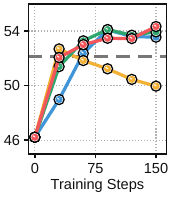}
\vspace{-10pt}
\caption{OlympiadBench}
\label{fig:successor_curve_olympiadbench}
\end{subfigure}

\caption{
\textbf{Learning curves for successive model transfer on individual math benchmarks.}
We use Qwen3-4B as the teacher and Qwen3-8B as the student. The gray dashed line denotes the performance of the weak teacher. All training settings follow the default Math configuration described in \autoref{app:exp_details}.
\looseness=-1
}
\label{fig:successor_math_learning_curves}

\endgroup
\vspace{2pt}
\end{figure*}

\begin{figure*}[!h]
\centering

\begingroup

\captionsetup[subfigure]{
  font=small,
  labelfont=normalfont,
  justification=centering,
  singlelinecheck=true,
  skip=0pt
}


\includegraphics[
    width=0.41\textwidth
]{assets/appendix/successor_learning_curve/legend.pdf}

\par\vspace{3pt}


\begin{subfigure}[t]{0.261\textwidth}
\centering
\includegraphics[
    width=\linewidth
]{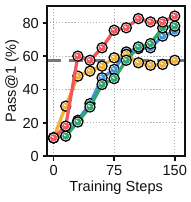}
\vspace{-10pt}
\caption{Knights \& Knaves}
\label{fig:successor_curve_kk}
\end{subfigure}
\hfill
\begin{subfigure}[t]{0.235\textwidth}
\centering
\includegraphics[
    width=\linewidth
]{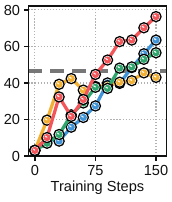}
\vspace{-10pt}
\caption{Quantum Lock}
\label{fig:successor_curve_quantum}
\end{subfigure}
\hfill
\begin{subfigure}[t]{0.235\textwidth}
\centering
\includegraphics[
    width=\linewidth
]{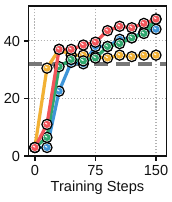}
\vspace{-10pt}
\caption{String Manipulation}
\label{fig:successor_curve_string}
\end{subfigure}
\hfill
\begin{subfigure}[t]{0.235\textwidth}
\centering
\includegraphics[
    width=\linewidth
]{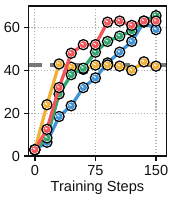}
\vspace{-10pt}
\caption{Countdown}
\label{fig:successor_curve_countdown}
\end{subfigure}

\caption{
\textbf{Learning curves for successive model transfer on individual reasoning benchmarks.}
We use Qwen3-4B-Base as the teacher and Qwen3-8B-Base as the student. The gray dashed line denotes the performance of the weak teacher. All training settings follow the default Reasoning Gym configuration described in \autoref{app:exp_details}.
}
\label{fig:successor_reasoning_learning_curves}

\endgroup
\end{figure*}

\clearpage
\subsection{Additional Results Across Model Variants and Tasks}
\label{app:additional_successive_model}

The instruction-tuned Qwen3 results reported in \secautoref{app:detailed_curve_successive_model} exhibit a potential response-length confound. Although thinking mode is disabled, longer responses may implicitly elicit some of the reasoning behavior associated with that mode, leading to abrupt, transient score gains. In \autoref{fig:successor_math_learning_curves}, for example, OPD briefly surpasses the teacher on both AIME\textquotesingle24 and AIME\textquotesingle25 at step 30 before returning toward a teacher-level plateau. Such behavior can confound comparisons of early learning speed. We therefore evaluate Qwen3-Base models, for which this effect is less pronounced, in \autoref{fig:successor_variant_math_learning_curves_math}. OPRD again substantially accelerates weak-to-strong generalization, achieving high performance much earlier than the baselines on all four benchmarks.
\looseness=-1


\begin{figure*}[!h]
\vspace{1pt}
\centering

\begingroup

\captionsetup[subfigure]{
  font=small,
  labelfont=normalfont,
  justification=centering,
  singlelinecheck=true,
  skip=0pt
}

\includegraphics[
    width=0.41\textwidth
]{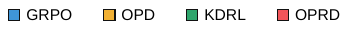}

\par\vspace{3pt}

\begin{subfigure}[t]{0.261\textwidth}
\centering
\includegraphics[
    width=\linewidth
]{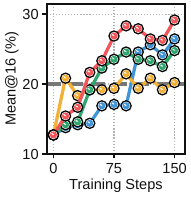}
\vspace{-10pt}
\caption{AIME\textquotesingle24}
\label{fig:successor_variant_aime24}
\end{subfigure}
\hfill
\begin{subfigure}[t]{0.235\textwidth}
\centering
\includegraphics[
    width=\linewidth
]{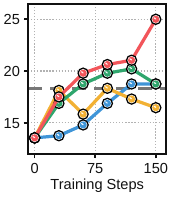}
\vspace{-10pt}
\caption{AIME\textquotesingle25}
\label{fig:successor_variant_aime25}
\end{subfigure}
\hfill
\begin{subfigure}[t]{0.235\textwidth}
\centering
\includegraphics[
    width=\linewidth
]{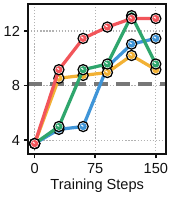}
\vspace{-10pt}
\caption{HMMT\textquotesingle25}
\label{fig:successor_variant_hmmt25}
\end{subfigure}
\hfill
\begin{subfigure}[t]{0.235\textwidth}
\centering
\includegraphics[
    width=\linewidth
]{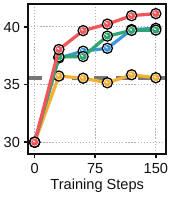}
\vspace{-10pt}
\caption{OlympiadBench}
\label{fig:successor_variant_olympiadbench}
\end{subfigure}

\endgroup

\caption{
\textbf{Learning curves on individual math benchmarks using Qwen3-Base models.}
We use Qwen3-4B-Base as the teacher and Qwen3-8B-Base as the student. The gray dashed line denotes the performance of the weak teacher.
All other settings follow the default Math configuration in \autoref{app:exp_details}, but we omit the system prompt and reduce the mini-batch size to 32, yielding two optimizer steps per training batch.
\looseness=-1
}
\label{fig:successor_variant_math_learning_curves_math}
\vspace{8pt}
\end{figure*}

Reward gains often conincide with longer responses. For instruction-tuned Qwen3, this makes a potential confound: distillation gains may simply reflect longer responses eliciting latent thinking behavior.
To test whether OPRD depends on this effect, we evaluate three more reasoning tasks in \autoref{fig:successor_variant_reasoning_learning_curves_reasoning}, where, as in String Manipulation, post-training shortens responses by a factor of three to four relative to the raw checkpoints. OPRD still improves substantially faster than the baselines, quickly reaching GRPO's eventual plateau while reducing response length. This opposite trend shows that its gains are not tied to response-length growth. OPRD's gradient scaling can nevertheless magnify length bias in the teacher-shift signal, as discussed in \secautoref{subsec:challenges_discussion}, \appautoref{app:asymetric_alignment}, and \appautoref{app:length_bias}. We resolve this by using a later teacher checkpoint, rather than the raw model, as the reference policy.
\looseness=-1

\begin{figure*}[!h]
\centering

\begingroup

\captionsetup[subfigure]{
  font=small,
  labelfont=normalfont,
  justification=centering,
  singlelinecheck=true,
  skip=0pt
}

\includegraphics[
    width=0.41\textwidth
]{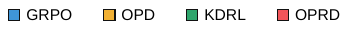}

\par\vspace{3pt}

\begin{subfigure}[t]{0.325\textwidth}
\centering
\includegraphics[
    width=\linewidth
]{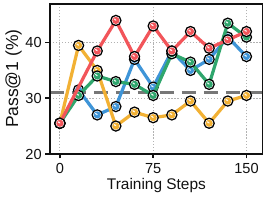}
\vspace{-10pt}
\caption{Zebra Puzzles}
\label{fig:successor_variant_zebra}
\end{subfigure}
\hfill
\begin{subfigure}[t]{0.30\textwidth}
\centering
\includegraphics[
    width=\linewidth
]{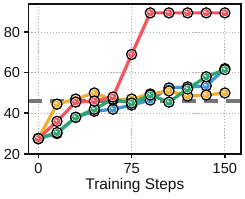}
\vspace{-10pt}
\caption{Color Cube Rotation}
\label{fig:successor_variant_color_cube}
\end{subfigure}
\hfill
\begin{subfigure}[t]{0.30\textwidth}
\centering
\includegraphics[
    width=\linewidth
]{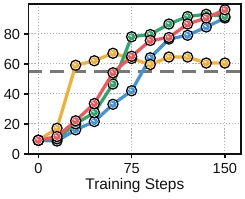}
\vspace{-10pt}
\caption{Binary Matrix}
\label{fig:successor_variant_binary_matrix}
\end{subfigure}

\endgroup

\caption{
\textbf{Learning curves on additional three Reasoning Gym tasks.}
We use Qwen3-4B-Base as the teacher and Qwen3-8B-Base as the student. The gray dashed line denotes the performance of the weak teacher.
For Color Cube Rotation and Binary Matrix, we use the teacher checkpoints from steps 30 and 45, respectively, as the reference policies instead of the raw step-0 models to mitigate length bias (see \autoref{app:length_bias} for details).
\looseness=-1
}
\label{fig:successor_variant_reasoning_learning_curves_reasoning}

\end{figure*}

\clearpage

To complement the detailed learning curves, \autoref{tab:app_main_results} reports checkpoint-averaged results, providing a numerical summary of how quickly each method reaches high performance. OPRD again achieves the strongest results, confirming that it accelerates weak-to-strong generalization across these additional settings.


\begin{table*}[!h]
\vspace{3pt}
\caption{
\textbf{Additional results for successive model transfer with Qwen3-Base models on mathematics and three additional Reasoning Gym tasks.}
We report Mean@16 for Math and Pass@1 for Reasoning Gym. We use intermediate GRPO checkpoints of Qwen3-4B-Base as teachers for Qwen3-8B-Base students.
The teacher and initial-student rows report fixed-checkpoint performance, whereas each trained-policy row averages evaluations at steps 30, 60, 90, 120, and 150. The corresponding learning curves are shown in \autoref{fig:successor_variant_math_learning_curves_math} and \autoref{fig:successor_variant_reasoning_learning_curves_reasoning}. The best trained-policy result in each column is shown in \textbf{bold}.
}
\label{tab:app_main_results}
\centering

\begingroup
\small
\setlength{\tabcolsep}{1.8pt}
\renewcommand{\arraystretch}{1.05}

\begin{tabularx}{\linewidth}{
@{}
l
*{5}{
  >{\hsize=0.92\hsize
    \linewidth=\hsize
    \centering\arraybackslash}X
}
*{4}{
  >{\hsize=1.10\hsize
    \linewidth=\hsize
    \centering\arraybackslash}X
}
@{}
}
\toprule


&
\multicolumn{5}{c}{\textbf{Math Reasoning}}
&
\multicolumn{4}{c}{\textbf{Reasoning Gym}} \\
\cmidrule(lr){2-6}
\cmidrule(lr){7-10}

\textbf{Policy}
& AIME\textquotesingle24
& AIME\textquotesingle25
& HMMT\textquotesingle25
& Olympiad
& \textbf{Avg.}
& Zebra
& Color
& Binary
& \textbf{Avg.} \\
\midrule


\rowcolor{blue!8}
&
\multicolumn{5}{c}{
\makebox[0pt][c]{%
\footnotesize
\textbf{Qwen3-4B-Base (Teacher)}
$\rightarrow$
\textbf{Qwen3-8B-Base (Student)}%
}
}
&
\multicolumn{4}{c}{
\makebox[0pt][c]{%
\footnotesize
\textbf{Qwen3-4B-Base (Teacher)}
$\rightarrow$
\textbf{Qwen3-8B-Base (Student)}%
}
} \\
\midrule


Teacher
& 20.00
& 18.33
& \,\,\,8.13
& 35.55
& 20.50
& 31.00
& 46.00
& 55.00
& 44.00 \\

\midrule

Student
& 12.71
& 13.54
& \,\,\,3.75
& 30.01
& 15.00
& 25.50
& 27.50
& \,\,\,9.00
& 20.67 \\


+ GRPO
& 20.00
& 16.58
& \,\,\,8.33
& 38.59
& 20.88
& 35.40
& 48.30
& 56.50
& 46.73 \\

+ OPD
& 20.00
& 17.21
& \,\,\,9.13
& 35.58
& 20.48
& 29.10
& 48.30
& 62.10
& 46.50 \\

+ KDRL
& 21.92
& 18.88
& \,\,\,9.29
& 38.69
& 22.19
& 35.60
& 49.50
& 61.40
& 48.83 \\

\rowcolor{gray!15}
+ \textbf{OPRD}
& \textbf{24.79}
& \textbf{20.79}
& \textbf{11.75}
& \textbf{40.01}
& \textbf{24.34}
& \textbf{39.10}
& \textbf{72.40}
& \textbf{66.80}
& \textbf{59.43} \\

\bottomrule
\end{tabularx}

\endgroup
\vspace{3pt}
\end{table*}

\subsection{Evaluation Results with Standard Deviations}
\label{app:standard_deviation}

To assess the evaluation-time robustness of the comparisons in \autoref{tab:main_results}, we report response-resampling variability in \autoref{tab:std_weak_to_strong}. Because multi-seed post-training is prohibitively expensive, we hold the benchmark problems and trained checkpoints fixed and resample only their responses. Each of 1{,}000 replicates draws 16 responses with replacement from a pool of 32 per Math problem and one from a pool of eight per Reasoning Gym problem. Trained-policy results are averaged over five checkpoints within each replicate, and we report the resulting mean and sample standard deviation. OPRD still surpasses the strongest baseline by approximately 8.0 points on Math and 9.9 points on Reasoning Gym, margins far exceeding the observed response-resampling variability.
\looseness=-1


\begin{table*}[!h]
\caption{
\textbf{Evaluation results with standard deviations for successive model transfer on mathematics and reasoning tasks.}
We use intermediate GRPO checkpoints of 4B-scale Qwen3 models as teachers and report Mean@16 for mathematics and Pass@1 for Reasoning Gym as the bootstrap mean $\pm$ sample standard deviation over 1{,}000 response-resampled evaluations, with benchmark items held fixed. In each bootstrap replicate, we sample 16 responses with replacement from a pool of 32 for each mathematics problem and one response from a pool of eight for each Reasoning Gym problem.
\looseness=-1
}
\label{tab:std_weak_to_strong}
\centering

\begingroup
\small

\setlength{\tabcolsep}{2.0pt}
\renewcommand{\arraystretch}{1.25}
\renewcommand{\tabularxcolumn}[1]{m{#1}}

\definecolor{teacherheader}{RGB}{235,235,250}
\definecolor{stdred}{RGB}{185,45,65}

\newcommand{\stdresult}[2]{%
  \shortstack[c]{%
    #1\\[0pt]
    {\scriptsize\textcolor{stdred}{$\pm$\,#2}}%
  }%
}

\begin{tabularx}{\linewidth}{
@{}
>{\raggedright\arraybackslash}m{0.078\linewidth}
*{10}{>{\centering\arraybackslash}X}
@{}
}

\toprule

&
\multicolumn{5}{c}{\textbf{Math Reasoning}}
&
\multicolumn{5}{c}{\textbf{Reasoning Gym}}
\\

\cmidrule(lr){2-6}
\cmidrule(lr){7-11}

\textbf{Policy}
&
AIME\textquotesingle24
&
AIME\textquotesingle25
&
HMMT\textquotesingle25
&
Olympiad
&
\textbf{Avg.}
&
Knights
&
Quantum
&
String
&
Count
&
\textbf{Avg.}
\\

\midrule

\rowcolor{teacherheader}
\multicolumn{6}{c}{
  {\footnotesize\textbf{
    Qwen3-4B (Teacher) $\rightarrow$ Qwen3-8B (Student)
  }}
}
&
\multicolumn{5}{c}{
  {\footnotesize\textbf{
    Qwen3-4B-Base (Teacher) $\rightarrow$ Qwen3-8B-Base (Student)
  }}
}
\\

\midrule

Teacher
&
\stdresult{41.77}{1.48}
&
\stdresult{36.77}{1.38}
&
\stdresult{21.44}{1.15}
&
\stdresult{52.02}{0.23}
&
\stdresult{38.00}{0.58}
&
\stdresult{54.92}{2.93}
&
\stdresult{41.64}{2.73}
&
\stdresult{33.78}{1.17}
&
\stdresult{42.13}{1.57}
&
\stdresult{43.12}{1.11}
\\

\midrule

Student
&
\stdresult{24.25}{1.09}
&
\stdresult{19.84}{1.09}
&
\stdresult{13.04}{0.95}
&
\stdresult{46.21}{0.24}
&
\stdresult{25.83}{0.45}
&
\stdresult{11.71}{1.96}
&
\stdresult{5.00}{1.35}
&
\stdresult{3.61}{1.12}
&
\stdresult{2.83}{1.05}
&
\stdresult{5.79}{0.70}
\\

\midrule

+ GRPO
&
\stdresult{46.88}{0.63}
&
\stdresult{36.61}{0.55}
&
\stdresult{22.47}{0.51}
&
\stdresult{52.48}{0.10}
&
\stdresult{39.61}{0.25}
&
\stdresult{53.17}{1.21}
&
\stdresult{34.70}{0.93}
&
\stdresult{35.51}{0.59}
&
\stdresult{41.95}{0.64}
&
\stdresult{41.33}{0.45}
\\

+ OPD
&
\stdresult{46.62}{0.74}
&
\stdresult{37.91}{0.59}
&
\stdresult{21.40}{0.47}
&
\stdresult{51.30}{0.10}
&
\stdresult{39.31}{0.27}
&
\stdresult{55.78}{1.31}
&
\stdresult{39.91}{1.19}
&
\stdresult{34.55}{0.73}
&
\stdresult{42.11}{0.78}
&
\stdresult{43.09}{0.51}
\\

+ KDRL
&
\stdresult{54.20}{0.63}
&
\stdresult{42.80}{0.59}
&
\stdresult{25.66}{0.55}
&
\stdresult{\textbf{53.39}}{0.10}
&
\stdresult{44.01}{0.26}
&
\stdresult{53.91}{1.14}
&
\stdresult{37.83}{1.12}
&
\stdresult{38.10}{0.58}
&
\stdresult{49.54}{0.75}
&
\stdresult{44.84}{0.48}
\\

\rowcolor{gray!15}
\textbf{+ OPRD}
&
\stdresult{\textbf{67.40}}{0.63}
&
\stdresult{\textbf{55.69}}{0.64}
&
\stdresult{\textbf{31.63}}{0.58}
&
\stdresult{\textbf{53.28}}{0.09}
&
\stdresult{\textbf{52.00}}{0.27}
&
\stdresult{\textbf{72.02}}{0.94}
&
\stdresult{\textbf{49.70}}{1.02}
&
\stdresult{\textbf{42.11}}{0.57}
&
\stdresult{\textbf{55.30}}{0.75}
&
\stdresult{\textbf{54.78}}{0.42}
\\

\bottomrule

\end{tabularx}

\endgroup
\end{table*}

\clearpage
\section{Detailed Results for Multi-Teacher Weak-to-Strong Distillation}
\label{app:multiteacher_distillation}

We follow the single-teacher Reasoning Gym setting but jointly train one student on domain-mixed batches, pairing each example with its task-specific teacher. \autoref{fig:multiteacher_learning_curves} shows the per-domain learning curves. Despite heterogeneous task structures and response-length trends—String Manipulation responses shorten as reward improves, whereas those for the other tasks generally lengthen—OPRD accelerates learning and attains the highest Pass@1 across all four domains. MOPD shows signs of cross-task interference, most notably on Quantum Lock, where it falls below the corresponding specialist, while OPRD rapidly transfers the specialist capabilities and continues improving without comparable degradation.
\looseness=-1


\begin{figure*}[!h]
\vspace{-2pt}
\centering

\begingroup

\captionsetup[subfigure]{
  font=small,
  labelfont=normalfont,
  justification=centering,
  singlelinecheck=true,
  skip=0pt
}

\includegraphics[
    width=0.41\textwidth
]{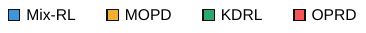}

\par\vspace{3pt}

\begin{subfigure}[t]{0.261\textwidth}
\centering
\includegraphics[
    width=\linewidth
]{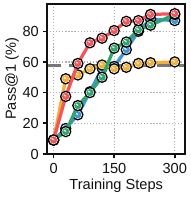}
\vspace{-10pt}
\caption{Knights \& Knaves}
\label{fig:multiteacher_curve_kk}
\end{subfigure}
\hfill
\begin{subfigure}[t]{0.235\textwidth}
\centering
\includegraphics[
    width=\linewidth
]{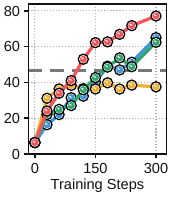}
\vspace{-10pt}
\caption{Quantum Lock}
\label{fig:multiteacher_curve_quantum}
\end{subfigure}
\hfill
\begin{subfigure}[t]{0.235\textwidth}
\centering
\includegraphics[
    width=\linewidth
]{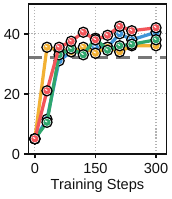}
\vspace{-10pt}
\caption{String Manipulation}
\label{fig:multiteacher_curve_string}
\end{subfigure}
\hfill
\begin{subfigure}[t]{0.235\textwidth}
\centering
\includegraphics[
    width=\linewidth
]{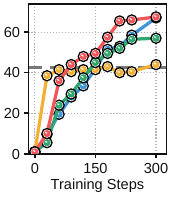}
\vspace{-10pt}
\caption{Countdown}
\label{fig:multiteacher_curve_countdown}
\end{subfigure}

\endgroup

\caption{
\textbf{Learning curves on individual Reasoning Gym tasks under multi-teacher distillation.}
We use four task-specific Qwen3-4B-Base models as teachers and jointly train a Qwen3-8B-Base student. The gray dashed line denotes the performance of the corresponding specialist teacher. All other settings follow the configurations described in \secautoref{app:exp_details}.
\looseness=-1
}
\label{fig:multiteacher_learning_curves}
\vspace{-2pt}
\end{figure*}

\section{Detailed Results for Strong-to-Weak Distillation}
\label{app:strong_to_weak_distillation}

\autoref{fig:strong_to_weak_app} presents detailed learning curves for strong-to-weak settings. On AIME\textquotesingle24, OPRD raises the Qwen3-1.7B initial student's Mean@16 from 10.0 to above 41 within 150 updates, whereas GRPO reaches only about 25 at the same point and 35 even after 240 updates. On Knights \& Knaves, OPD improves initially but collapses midway through training and remains below the teacher after recovering. In contrast, OPRD rapidly improves the Qwen3-0.6B student and ultimately surpasses the Qwen3-8B-Base teacher's Pass@1 of 64.5. 
\looseness=-1


\begin{figure*}[!h]
\centering

\begingroup

\captionsetup[subfigure]{
  font=small,
  labelfont=normalfont,
  justification=centering,
  singlelinecheck=true,
  skip=3pt
}


\includegraphics[
    width=0.4\textwidth,
    keepaspectratio
]{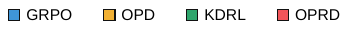}

\par\vspace{3pt}


\begin{subfigure}[t]{0.34\linewidth}
\centering

\includegraphics[
    width=\linewidth,
    keepaspectratio
]{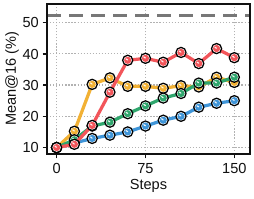}

\caption{AIME\textquotesingle24}
\label{fig:strong_to_weak_aime24}
\end{subfigure}
\hspace{0.04\linewidth}
%
\begin{subfigure}[t]{0.34\linewidth}
\centering

\includegraphics[
    width=\linewidth,
    keepaspectratio
]{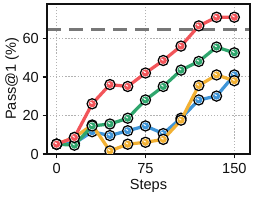}
\caption{Knights \& Knaves}
\label{fig:strong_to_weak_knights_knaves}
\end{subfigure}

\caption{
\textbf{Learning curves for strong-to-weak distillation on two tasks.}
We evaluate Qwen3-8B $\rightarrow$ Qwen3-1.7B on AIME\textquotesingle24 and Qwen3-8B-Base $\rightarrow$ Qwen3-0.6B on Knights \& Knaves, using teachers from step 105 of task-specific GRPO. Gray dashed lines mark teacher performance. All other settings follow \autoref{app:exp_details}, with method-specific distillation coefficients scheduled over the first 45 updates.
\looseness=-1
}
\label{fig:strong_to_weak_app}

\endgroup
\end{figure*}

\clearpage
\section{Detailed Results for Weak-to-Strong Method Comparisons}

\subsection{Baseline Implementation Details}
\label{subsec:baseline_implementation}

Under the default training and evaluation configurations in \appautoref{app:exp_details}, all baselines use the same model pairs, teacher checkpoints, prompt formats, and evaluation protocols as OPRD unless otherwise noted. We describe only their method-specific settings below.

\vspace{-5pt}
\begin{itemize}[leftmargin=*, itemsep=2pt]
    \item \textbf{W2SR-P} \citep{yuan-etal-2026-incentivizing}. 
    We reproduce the seeded prompt stream used by the 150-update RL runs, yielding $150\times64=9{,}600$ prompt occurrences. For each occurrence, we sample eight responses from the weak teacher and select one verifier-correct, format-valid, non-truncated response, discarding occurrences with no valid candidate. We then fully fine-tune the initial student checkpoint for three epochs using next-token prediction with a global batch size of 64 and a learning rate of $2\times10^{-5}$.
    \looseness=-1

    \vspace{3pt}
    \item \textbf{S2L-PO} \citep{ren2026smaller}. S2L-PO linearly anneals the fraction of weak-model rollouts over the first half of GRPO training. Although the original method advocates using a smaller \textit{base} model as the weak explorer to exploit its policy-level diversity, we use the same post-RL weak teacher as the other baselines for a controlled comparison. While the original implementation uses 16 rollouts per prompt, we retain its 16-phase schedule with the default group size of eight. Over 150 updates, the weak/student composition transitions from $8/0$ to $0/8$ during the first eight phases (updates 1--75) and remains at $0/8$ during the remaining eight phases (updates 76--150). For each trajectory, we compute the importance ratio using its generating policy as $\pi_{\mathrm{rollout}}$, namely $\pi_T$ for weak-teacher rollouts and $\pi_{\theta_{\mathrm{old}}}$ for student rollouts. We also retain the original KL regularization toward the initial student with a coefficient of $10^{-3}$.
    \looseness=-1

    \vspace{3pt}
    \item \textbf{OPSD} \citep{zhao2026selfdistilled}. OPSD is originally a self-distillation method that uses a correct self-generated rollout as privileged information. To adapt it to our weak-to-strong setting, we instead use a verifier-correct weak-teacher rollout as privileged information, falling back to a correct student rollout when the weak teacher produces none. An EMA copy of the student serves as the self-teacher, conditioning on the privileged rollout to provide distillation targets for the original student trajectories and being updated after each step with a rate of $0.05$. Whenever valid privileged information is available, we apply the distillation loss to all student trajectories in the group, regardless of whether they are correct or incorrect. We use generalized JSD with $\alpha=0.5$ over the top-100 student tokens and an additional tail bucket.
    \looseness=-1

    \vspace{3pt}
    \item \textbf{Direct-OPD} \citep{feng2026weak}. Developed concurrently with OPRD, Direct-OPD optimizes the weak policy shift as a dense reward on student-generated trajectories:
    \[
    \mathcal{J}_{\mathrm{Direct\text{-}OPD}}=\mathbb{E}_{x,\,y\sim\pi_\theta}\!\left[\sum_t\!\left(\log\pi_T(y_t\mid s_t)-\log\pi_T^{\mathrm{ref}}(y_t\mid s_t)\right)\right]-\alpha D_{\mathrm{KL}}\!\left(\pi_\theta\,\|\,\pi_{S,0}\right).
    \]
    Following the original implementation, we evaluate the dense reward over the top-16 tokens of the old student policy at each visited state and use the reported hyperparameters. The policy-shift scale is fixed at $1$, while $\alpha$, the coefficient of the KL anchor toward $\pi_{S,0}$, is initialized at $2.5$. Before each actor update, $\alpha$ is multiplied by $1.01$ or $0.99$ depending on whether the batch-mean dense reward is positive or negative, respectively, and clipped to $[0.5,2.5]$. This KL anchor is computed separately on the sampled response tokens using the low-variance k3 estimator.
    \looseness=-1

    \vspace{3pt}
    \item \textbf{W2S-OPD} \citep{yu2026weak}. 
    W2S-OPD reanchors the weak policy shift to the initial student by defining the proxy teacher as
    \[
    \pi_{\mathrm{proxy}}(v\mid s_t)\propto \pi_{S,0}(v\mid s_t)\left(\frac{\pi_T(v\mid s_t)}{\pi_T^{\mathrm{ref}}(v\mid s_t)}\right)^\gamma.
    \]
    Following the original implementation, we set $\gamma=1$ and compute the proxy scores over the full vocabulary before selecting the proxy's top-32 tokens. We normalize both the proxy and current-student distributions over this proxy-selected support and minimize the reverse KL from the current student to the proxy. This restricted-support reverse KL serves as the sole actor objective, with no additional KL anchor or adaptive coefficient.
    \looseness=-1
    
\end{itemize}

\clearpage
\subsection{Detailed Learning Curve}
\label{subsec:baseline_learning_curve}

In \autoref{fig:weak_to_strong_offpolicy_comparison}, we compare OPRD with three methods that leverage off-policy generations from the weak teacher. OPRD exhibits the strongest and most consistent gains overall. Consistent with \citet{yuan-etal-2026-incentivizing}, W2SR-P shows that SFT on verifier-correct teacher rollouts can move the student slightly beyond weak-teacher performance. S2L-PO \citep{ren2026smaller} remains competitive on the two Reasoning Gym tasks, although its AIME\textquotesingle24 performance deteriorates after weak-teacher rollouts are fully annealed out at update 75 and its checkpoint-averaged performance remains below OPRD. Our OPSD variant \citep{zhao2026selfdistilled} performs poorly whether the privileged trace is self-generated or supplied by the weak teacher. This behavior is consistent with recent findings that privileged self-distillation can impair thinking models by shortening or suppressing deliberative reasoning \citep{kim2026does, kaur2026rethinking}. Accordingly, OPSD provides a modest benefit only on String Manipulation, where higher rewards coincide with shorter reasoning traces, and fails to deliver competitive gains on the other tasks.
\looseness=-1


\begin{figure*}[!h]
\centering

\begingroup

\captionsetup[subfigure]{
  font=small,
  labelfont=normalfont,
  justification=centering,
  singlelinecheck=true,
  skip=6pt
}


\includegraphics[
    width=0.65\textwidth,
    keepaspectratio
]{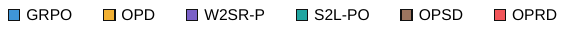}

\par\vspace{3pt}


\begin{subfigure}[t]{0.32\linewidth}
\centering

\includegraphics[
    width=\linewidth,
    keepaspectratio
]{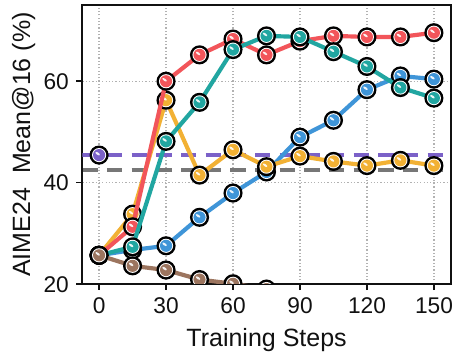}

\caption{AIME\textquotesingle24}
\label{fig:weak_to_strong_offpolicy_aime24}
\end{subfigure}%
\hfill
\begin{subfigure}[t]{0.32\linewidth}
\centering

\includegraphics[
    width=\linewidth,
    keepaspectratio
]{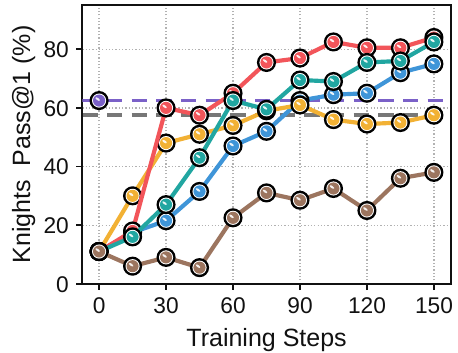}

\caption{Knights \& Knaves}
\label{fig:weak_to_strong_offpolicy_knights}
\end{subfigure}%
\hfill
\begin{subfigure}[t]{0.32\linewidth}
\centering

\includegraphics[
    width=\linewidth,
    keepaspectratio
]{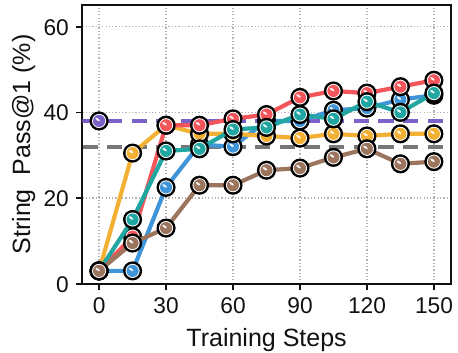}

\caption{String Manipulation}
\label{fig:weak_to_strong_offpolicy_string}
\end{subfigure}

\caption{
\textbf{Training dynamics of weak-to-strong methods using off-policy generations from the weak teacher.}
Because W2SR-P performs SFT without subsequent RL, its final performance is shown as a horizontal dashed line. The gray dashed lines indicate weak-teacher performance. See \appautoref{subsec:baseline_implementation} for baseline implementation details.
\looseness=-1
}
\label{fig:weak_to_strong_offpolicy_comparison}

\endgroup
\end{figure*}

In \autoref{fig:weak_to_strong_delta_comparison}, we further compare OPRD with Direct-OPD \citep{feng2026weak} and W2S-OPD \citep{yu2026weak}, two concurrent methods that likewise exploit the weak policy delta. Although these methods use the same transferred signal, their objectives are defined directly by the delta and therefore receive no independent verifier-driven update direction. W2S-OPD can surpass the weak teacher, but ultimately plateaus near teacher-level performance because its optimization target remains restricted to the policy changes encoded by the weak teacher. OPRD instead uses the delta only to identify and rescale the component of the verifier gradient aligned with the weak shift, while preserving the orthogonal component $\mathbf g_t^\perp$. Consequently, the delta guides rather than replaces verifier-driven optimization, allowing OPRD to improve beyond teacher-level saturation and achieve the strongest final performance across all three tasks.
\looseness=-1


\begin{figure*}[!h]
\centering

\begingroup

\captionsetup[subfigure]{
  font=small,
  labelfont=normalfont,
  justification=centering,
  singlelinecheck=true,
  skip=6pt
}


\includegraphics[
    width=0.58\textwidth,
    keepaspectratio
]{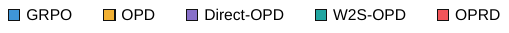}

\par\vspace{3pt}


\begin{subfigure}[t]{0.32\linewidth}
\centering

\includegraphics[
    width=\linewidth,
    keepaspectratio
]{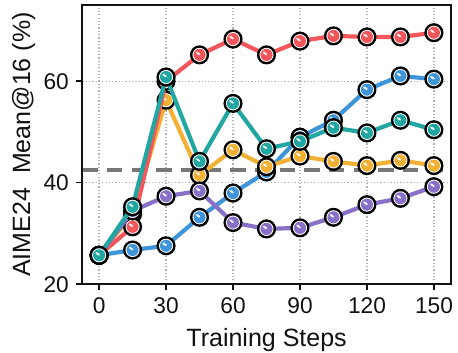}

\caption{AIME\textquotesingle24}
\label{fig:weak_to_strong_delta_aime24}
\end{subfigure}%
\hfill
\begin{subfigure}[t]{0.32\linewidth}
\centering

\includegraphics[
    width=\linewidth,
    keepaspectratio
]{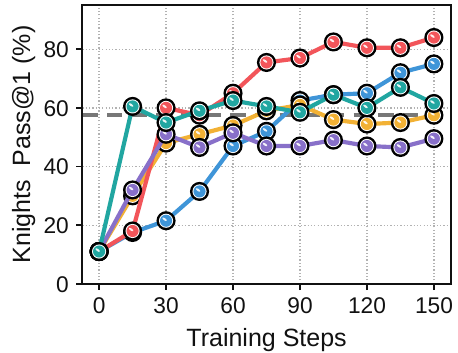}

\caption{Knights \& Knaves}
\label{fig:weak_to_strong_delta_knights}
\end{subfigure}%
\hfill
\begin{subfigure}[t]{0.32\linewidth}
\centering

\includegraphics[
    width=\linewidth,
    keepaspectratio
]{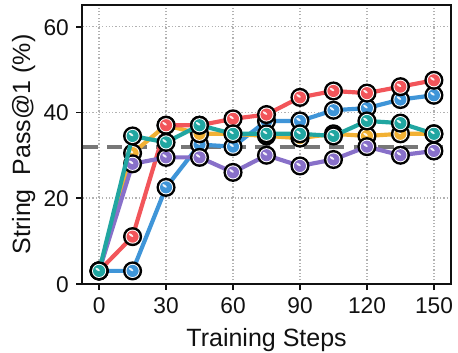}

\caption{String Manipulation}
\label{fig:weak_to_strong_delta_string}
\end{subfigure}

\caption{
\textbf{Training dynamics of weak-to-strong methods using the weak policy delta.}
The horizontal dashed lines indicate weak-teacher performance. See \appautoref{subsec:baseline_implementation} for baseline implementation details.
}
\label{fig:weak_to_strong_delta_comparison}

\endgroup
\end{figure*}

\clearpage
\section{Additional Results on Guidance-Direction Construction}
\label{app:delta_signal}

OPRD requires a guidance direction that captures the reward-relevant change acquired by the teacher. \secautoref{subsec:component_analysis} compares three constructions: the weak policy delta $\boldsymbol{\Delta}_t$ contrasts the post-trained teacher with its reference policy and isolates the change acquired during post-training; OPD contrasts the post-trained teacher with the current student, so its direction conflates the teacher's post-training update with the broader mismatch between the teacher's reference policy and the current student (i.e., $\mathbf{z}_T-\mathbf{z}_S=(\mathbf{z}_T-\mathbf{z}_T^{\mathrm{ref}})+(\mathbf{z}_T^{\mathrm{ref}}-\mathbf{z}_S)$); and OPSD derives its direction from the discrepancy induced by a privileged teacher draft. The comparison uses the step-60 checkpoint from the weak teacher's GRPO run. Under this setting, the weak policy delta outperforms both alternatives by a wide margin. 
\looseness=-1

However, OPD follows the gradient of a teacher-matching objective, its usefulness as a scaling direction should depend on teacher performance. We test this using the stronger teacher checkpoints adopted in our main experiments while keeping all other settings fixed (\autoref{fig:delta_signal_app}). For mathematics, we use the step-75 teacher, which is already relatively strong. For Knights \& Knaves, we use the step-105 teacher, which achieves 57.5\% Pass@1 compared with 29.0\% at step 60. With these teachers, the OPD direction performs well on both tasks, although it remains slightly behind the weak policy delta overall. OPSD is less consistent: it finishes above GRPO on AIME\textquotesingle24 but barely improves on Knights \& Knaves.
\looseness=-1


\begin{figure*}[!h]
\centering

\begingroup

\captionsetup[subfigure]{
  font=small,
  labelfont=normalfont,
  justification=centering,
  singlelinecheck=true,
  skip=7pt
}


\includegraphics[
    width=0.55\textwidth,
    keepaspectratio
]{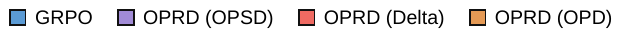}

\par\vspace{3pt}


\begin{subfigure}[t]{0.34\linewidth}
\centering

\includegraphics[
    width=\linewidth,
    keepaspectratio
]{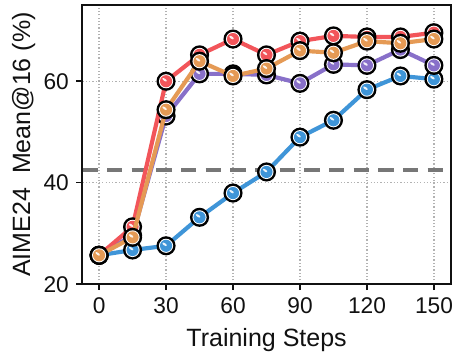}

\caption{Math}
\label{fig:delta_signal_math_app}
\end{subfigure}
\hspace{0.04\linewidth}
%
\begin{subfigure}[t]{0.34\linewidth}
\centering

\includegraphics[
    width=\linewidth,
    keepaspectratio
]{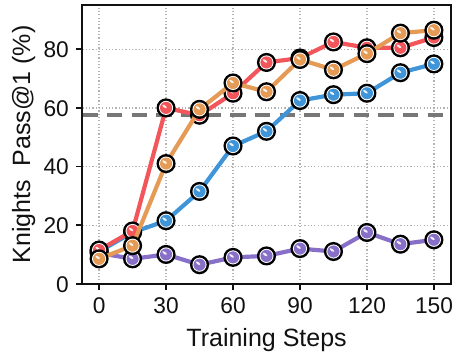}

\caption{Knights \& Knaves}
\label{fig:delta_signal_knights_knaves_app}
\end{subfigure}

\caption{
\textbf{Comparison of guidance-direction constructions.}
The variants construct $\mathbf d_t$ from the teacher--reference policy shift (Delta), the teacher--student mismatch (OPD), or the privileged-context discrepancy (OPSD). Using stronger teacher checkpoints than those used in \autoref{fig:analysis_delta_signal}, we pair the step-75 Qwen3-4B teacher with a Qwen3-8B student for Math and the step-105 Qwen3-4B-Base teacher with a Qwen3-8B-Base student for Knights \& Knaves. Gray dashed lines mark teacher performance. All other training configurations follow \autoref{app:exp_details}.
\looseness=-1
}
\label{fig:delta_signal_app}

\endgroup
\end{figure*}

These results suggest that the OPD gradient can provide a useful guidance direction when the weak teacher is sufficiently capable. In practice, however, the eventual performance gap between the weak teacher and the larger student cannot be known without fully training the student, making OPD difficult to adopt as a reliable default. OPSD is also less reliable because a privileged draft can constrain the student to a prescribed reasoning path \citep{kim2026does, kaur2026rethinking}. Using its self-distillation gradient as $\mathbf d_t$ can then amplify verifier-gradient components aligned with this restrictive signal and hinder learning.
The weak policy delta avoids both limitations because it compares the post-trained teacher only with its own reference policy. This isolates the change acquired during post-training without relying on either the evolving student or a privileged draft. We therefore retain the weak policy delta as our default guidance direction due to its stronger empirical performance and greater reliability in practice.
\looseness=-1

\clearpage
\section{Additional Results on Length Bias in Teacher Policy Shift}
\label{app:length_bias}

The teacher policy shift $\boldsymbol{\Delta}_t$ (and hence the guidance direction $\mathbf d_t$) may contain reward-irrelevant components such as $\boldsymbol{\epsilon}_t$ alongside task-relevant progress. As discussed in \secautoref{subsec:challenges_discussion} and \appautoref{app:asymetric_alignment}, OPRD amplifies the projection of $\mathbf g_t$ onto $\mathbf d_t$, and this can also magnify reward-irrelevant components encoded in the guidance direction. Response length provides one observable example: when it correlates with verifier reward, both $\mathbf g_t$ and $\mathbf d_t$ may favor shorter responses even when shortening itself does not improve reasoning. Because $\mathbf d_t$ is derived from $\boldsymbol{\Delta}_t$, the choice of $\pi_T^{\rm ref}$ determines how much of the teacher's length change enters the guidance. A step-0 reference uses the base model and therefore includes the full post-training shift, whereas a later reference can exclude a sharp early length collapse.
\looseness=-1

Binary Matrix provides another instance of this behavior. The teacher's mean response length falls sharply between steps 30 and 45 and then stabilizes. We therefore compare step-0 and step-45 choices of $\pi_T^{\rm ref}$: the former includes a large shortening component in $\boldsymbol{\Delta}_t$, whereas the latter excludes most of it. \autoref{fig:length_bias_binary_matrix} compares both OPRD variants with GRPO, OPD, and KDRL. Both initially improve faster than GRPO but diverge after step 90. With the step-0 reference, the student's responses continue to shorten and Pass@1 plateaus at 81.5\%, below GRPO and KDRL, consistent with the correction overemphasizing length reduction. With the step-45 reference, response length does not exhibit the same continued decline and Pass@1 reaches 96.0\%. As on Color Cube, placing the reference after the sharp length transition mitigates this bias while retaining the teacher's later task progress. However, changing the reference modifies $\boldsymbol{\Delta}_t$ as a whole rather than isolating its length-related component. Disentangling structured bias from task-relevant guidance therefore remains an open question.
\looseness=-1


\begin{figure*}[!h]
\centering

\begingroup

\captionsetup[subfigure]{
  font=small,
  labelfont=normalfont,
  justification=centering,
  singlelinecheck=true,
  skip=7pt
}


\begin{subfigure}[t]{0.34\linewidth}
\centering

\includegraphics[
    width=\linewidth,
    keepaspectratio
]{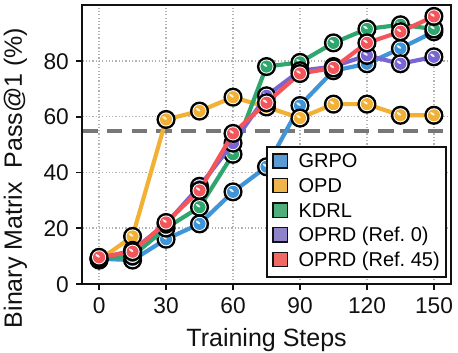}

\caption{Task performance}
\label{fig:length_bias_binary_matrix_performance}
\end{subfigure}
\hspace{0.04\linewidth}
%
\begin{subfigure}[t]{0.34\linewidth}
\centering

\includegraphics[
    width=\linewidth,
    keepaspectratio
]{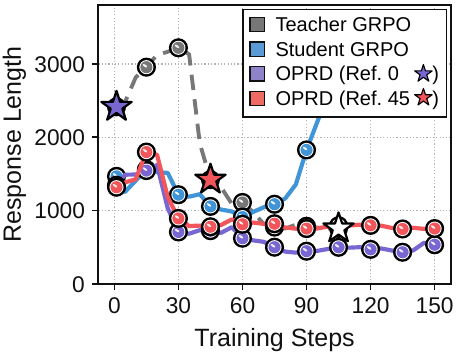}
\caption{Response Length}
\label{fig:length_bias_binary_matrix_response_length}
\end{subfigure}

\caption{
\textbf{Effect of reference-policy selection on length bias in Binary Matrix.}
We transfer a step-105 Qwen3-4B-Base teacher $\pi_T$ to a Qwen3-8B-Base student, using either the step-0 or step-45 checkpoint from the same GRPO run as $\pi_T^{\rm ref}$. OPRD's negative-branch scale $\lambda_t$ is warmed up over the first 75 steps. The gray dashed line marks teacher performance, while the colored stars denote the mean response lengths of the two choices of $\pi_T^{\rm ref}$, and the white star marks that of $\pi_T$. All other settings follow \appautoref{app:exp_details}.
\looseness=-1
}
\label{fig:length_bias_binary_matrix}

\endgroup
\end{figure*}

\clearpage
\section{Detailed Analysis of Student Behavior}
\label{app:style_qualitative}

\subsection{Token Alignment Analysis}
\label{app:token_visualization}

We analyze the correct AIME\textquotesingle25 response shown in \autoref{fig:token_correction} using the OPRD-trained Qwen3-8B student at update 150 and the Qwen3-4B teacher at GRPO update 75. We compute the policy gradient $\mathbf g_t$ assuming a single correct rollout with $A_t=1$, since the magnitude of a positive advantage does not affect cosine similarity.
Each token is colored by $\cos(\mathbf d_t,\mathbf g_t)$, with green indicating positive alignment and red indicating negative alignment. We compare the original continuation with an alternative generated by the same student checkpoint, keeping the selected prefix fixed and forcing the next token to be \shifttoken{5}, the top-ranked token under $\mathbf d_t$.
\looseness=-1

\subsection{Response Style Analysis}
\label{app:style}

We evaluate Qwen3-8B students trained with OPD and OPRD on DAPO-Math-17K at updates 30, 60, 90, 120, and 150. For each checkpoint, we generate 16 responses to each of the 30 AIME\textquotesingle24 problems, yielding 480 responses. We use two fixed references: the Qwen3-4B teacher at GRPO update 75 and a separately GRPO-trained Qwen3-8B student at update 150. Each response is represented by 101 style features across five categories: connectives (15), modality (10), grammar (64), punctuation (8), and sentence and paragraph structure (4). The first three categories measure relative frequencies of function words, including connectives, modal and negation words, and grammatical words such as pronouns and articles. Punctuation features count occurrences per 1,000 words, while structure features capture the mean and standard deviation of words per sentence and sentences per paragraph. We standardize the features at every checkpoint of each method using a shared mean and standard deviation for each feature, computed from the 960 responses of the two reference models.
\looseness=-1


\begin{figure*}[!h]
\centering

\begingroup

\captionsetup[subfigure]{
  font=small,
  labelfont=normalfont,
  justification=centering,
  singlelinecheck=true,
  skip=5pt
}


\begin{subfigure}[t]{0.29\linewidth}
\centering

\includegraphics[
    width=\linewidth,
    keepaspectratio
]{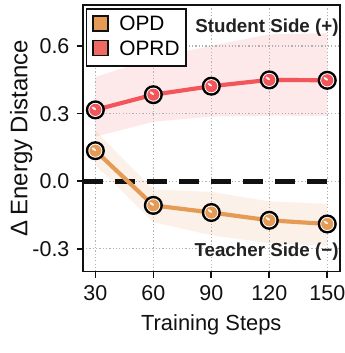}

\caption{Overall style similarity}
\label{fig:style_trajectory_aime24}
\end{subfigure}
\hfill
%
\begin{subfigure}[t]{0.685\linewidth}
\centering

\includegraphics[
    width=\linewidth,
    keepaspectratio
]{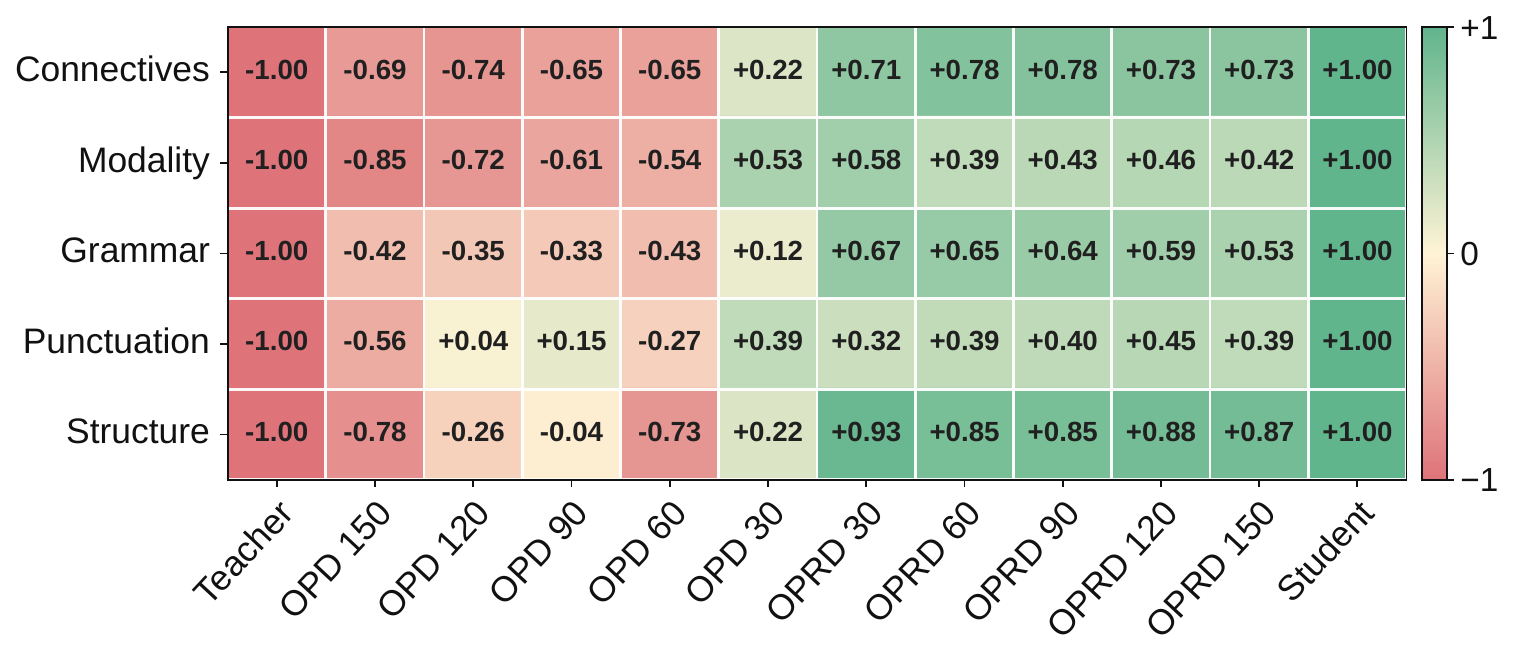}

\caption{Style similarity by category}
\label{fig:style_category_fingerprint_aime24}
\end{subfigure}

\par\vspace{2pt}

\caption{
\textbf{(a) Comparing response style distributions over training.}
Energy-distance differences are averaged across 30 AIME\textquotesingle24 problems, with positive values indicating greater similarity to the GRPO student and negative values to the teacher. Shading shows 95\% confidence intervals from 2,000 bootstrap resamples of the problems.
\textbf{(b) Measuring similarity to teacher and student response styles.}
Normalized distance differences compare average styles in five categories, with red indicating greater similarity to the teacher and green to the GRPO student.
Both panels use fixed references: the Qwen3-4B teacher at GRPO update 75 and the Qwen3-8B GRPO student at update 150.
\looseness=-1
}
\label{fig:style_qualitative_analysis_aime24}

\endgroup
\vspace{10pt}
\end{figure*}

\autoref{fig:style_trajectory_aime24} compares response style distributions using all 101 standardized features. For this comparison, we combine connectives, modality, and grammar into a group of 89 function-word features and scale the function-word, punctuation, and structure coordinates by $1/\sqrt{89}$, $1/\sqrt{8}$, and $1/\sqrt{4}$, respectively, to balance the three groups' contributions. For each problem, let $X_k$, $X_T$, and $X_S$ denote the sets of 16 response feature vectors from the evaluated checkpoint at update $k$, the teacher, and the GRPO student, respectively. We compute the energy-distance difference
\[
\Delta_{\mathrm{ED}}
=
\mathrm{ED}(X_k,X_T)-\mathrm{ED}(X_k,X_S).
\]
Energy distance measures differences between distributions by accounting for both between-set distances and within-set variation. We average $\Delta_{\mathrm{ED}}$ across the 30 problems, with positive values indicating greater similarity to the GRPO student and negative values to the teacher. Shading shows 95\% confidence intervals from 2,000 bootstrap resamples of the problems. OPD shifts toward the teacher over training, with its mean difference decreasing from $+0.135$ at update 30 to $-0.190$ at update 150. OPRD remains closer to the GRPO student at every evaluated checkpoint, with its mean difference increasing from $+0.316$ to $+0.447$ over the same period.
\looseness=-1

\autoref{fig:style_category_fingerprint_aime24} compares the same checkpoints separately across the five style categories. For each category $c$, we average the standardized features across all 480 responses without additional feature-group scaling. Let $\boldsymbol{\mu}_{k,c}$, $\boldsymbol{\mu}_{T,c}$, and $\boldsymbol{\mu}_{S,c}$ denote these average vectors for the evaluated checkpoint at update $k$, the teacher, and the GRPO student, respectively. We compute
\[
s_c(k)
=
\frac{
\|\boldsymbol{\mu}_{k,c}-\boldsymbol{\mu}_{T,c}\|_2
-
\|\boldsymbol{\mu}_{k,c}-\boldsymbol{\mu}_{S,c}\|_2
}{
\|\boldsymbol{\mu}_{T,c}-\boldsymbol{\mu}_{S,c}\|_2
}.
\]
The score measures the difference in distances to the two reference averages, normalized by their separation. Scores range from $-1$ to $+1$, with negative values indicating greater proximity to the teacher, positive values to the GRPO student, and zero indicating equal distance. OPD is closer to the GRPO student in all five categories at update 30 but closer to the teacher in all five by update 150. OPRD remains closer to the GRPO student in every category at all evaluated checkpoints, consistent with the overall distribution comparison.
\looseness=-1

\clearpage
\section{Computational Cost and Memory Usage}
\label{app:cost_memory}

\paragraph{Benchmark Setup.} 
To isolate method-specific training overhead from response-length differences, we force every generated response to contain exactly 16{,}384 tokens by ignoring EOS. Both benchmarks use the mathematics setting of \secautoref{subsec:weak_to_strong}, with a Qwen3-8B student and the Qwen3-4B teacher checkpoint at update 75. OPRD additionally loads the Qwen3-4B base policy as its reference. Within each benchmark, all methods receive prompts in the same order and use the same random seed. Wall-clock timing uses 64 prompts, whereas memory profiling uses 8 prompts, with 8 rollouts per prompt in both cases. All runs execute on four NVIDIA B200 GPUs with DP4, rollout TP1, BF16, Flash Attention 2, padding removal, and gradient checkpointing. Optimization uses a global minibatch of 64 responses and one PPO epoch per training step. Dynamic token batching caps each GPU at 36{,}864 tokens, which yields two complete samples per microbatch under the fixed-length setting. The rollout engine uses \texttt{gpu\_memory\_utilization=0.60} and \texttt{max\_num\_seqs=128}. Sampling uses temperature 1.0, top-$p$ 1.0, and no top-$k$ truncation. To keep reward-side computation identical, we replace task-specific reward evaluation with deterministic alternating binary rewards within each prompt group. Validation, periodic model saving, external logging, and all non-training diagnostics are disabled. 
\looseness=-1

\vspace{-10pt}
\paragraph{Wall-Clock Time.} 
We run each method in a fresh process, discard one complete warm-up step, and report the mean and sample standard deviation over the following four steps. As shown in \autoref{tab:cost}, rollout generation is the largest component of each training step, taking roughly 602 seconds and accounting for 61.6\% of the total GRPO time, with small differences across methods attributable to run-to-run variation. OPD and KDRL, each of which evaluates one frozen teacher, incur total overheads of 7.2\% and 7.9\% over GRPO, respectively. OPRD evaluates the teacher and its reference sequentially, increasing frozen-forward time from 60.46 seconds for OPD to 108.80 seconds. Because rollout generation dominates the step, this additional reference evaluation increases total time by only 4.4\% over OPD, resulting in an overall overhead of 11.9\% relative to GRPO. Student-forward time is effectively unchanged, while the update containing the teacher-direction projection and scaling increases by just 2.79 seconds over GRPO, equivalent to 0.25\% of the full OPRD step. Beyond the teacher evaluation already required by OPD and KDRL, nearly all of OPRD's additional runtime therefore comes from evaluating the reference policy. 
\looseness=-1

\begin{table}[!ht]
\caption{
\textbf{Wall-clock time per training step.} 
All methods process the same 64 prompts with 8 rollouts per prompt, with every response fixed at 16{,}384 tokens to equalize the number of generated tokens. Total time is reported as the mean $\pm$ sample standard deviation, while the component columns report their means. Rollout includes the complete generation call and the actor-to-rollout mode transition. Student forward is a no-gradient pass that recomputes the old log probabilities of the sampled tokens. The frozen-model column reports no-gradient teacher evaluation for OPD and KDRL and sequential teacher and reference evaluations for OPRD, while GRPO requires neither. Update includes a separate gradient-enabled student forward pass, backward propagation, and the optimizer step, excluding the separately timed frozen-model evaluations. Etc.\ includes reward construction, advantage computation, batch assembly and balancing, orchestration, and residual boundary costs. 
\looseness=-1
}
\label{tab:cost}
\centering

\begingroup
\small
\setlength{\tabcolsep}{3.0pt}
\renewcommand{\arraystretch}{1.15}

\begin{tabularx}{\linewidth}{
@{}
l
>{\hsize=1.35\hsize\linewidth=\hsize\centering\arraybackslash}X
>{\hsize=0.90\hsize\linewidth=\hsize\centering\arraybackslash}X
@{}p{6pt}@{}
>{\hsize=0.95\hsize\linewidth=\hsize\centering\arraybackslash}X
@{}p{6pt}@{}
>{\hsize=0.85\hsize\linewidth=\hsize\centering\arraybackslash}X
>{\hsize=1.20\hsize\linewidth=\hsize\centering\arraybackslash}X
@{}p{6pt}@{}
>{\hsize=0.95\hsize\linewidth=\hsize\centering\arraybackslash}X
>{\hsize=0.80\hsize\linewidth=\hsize\centering\arraybackslash}X
@{}
}
\toprule

&
\multicolumn{2}{c}{\textbf{Total}}
& {} &
\multicolumn{1}{c}{\textbf{Rollout}}
& {} &
\multicolumn{2}{c}{\textbf{Model Forward}}
& {} &
\multicolumn{2}{c}{\textbf{Optimization}} \\
\cmidrule(lr){2-3}
\cmidrule(lr){5-5}
\cmidrule(lr){7-8}
\cmidrule(lr){10-11}

\textbf{Method}
& Time (s/step)
& Overhead
& {}
& Student
& {}
& Student
& Teacher $+$ Ref
& {}
& Update
& Etc. \\
\midrule

\rowcolor{blue!8}
&
\multicolumn{10}{c}{
\textbf{Qwen3-4B (Teacher)}
$\rightarrow$
\textbf{Qwen3-8B (Student)}
} \\
\midrule

GRPO
& $\,\,\,977.72 \pm 16.83$
& --
& {}
& $602.20$
& {}
& $70.49$
& $\,\,\,\,\,\,0.00$
& {}
& $304.00$
& $1.03$ \\

OPD
& $1048.00 \pm 17.49$
& $\,\,\,+7.2\%$
& {}
& $610.74$
& {}
& $70.38$
& $\,\,\,60.46$
& {}
& $305.38$
& $1.04$ \\

KDRL
& $1055.41 \pm 15.94$
& $\,\,\,+7.9\%$
& {}
& $618.20$
& {}
& $70.08$
& $\,\,\,60.17$
& {}
& $305.92$
& $1.05$ \\

\rowcolor{gray!15}
OPRD
& $1094.44 \pm 16.17$
& $+11.9\%$
& {}
& $607.39$
& {}
& $70.40$
& $108.80$
& {}
& $306.79$
& $1.06$ \\

\bottomrule
\end{tabularx}

\endgroup
\vspace{10pt}
\end{table}

\clearpage
\paragraph{Peak GPU Memory.} 
We measure peak GPU memory during rollout generation and the actor update, while separately recording the frozen-policy evaluation performed within the update. We also report the overall maximum observed during the complete training step. 
As shown in \autoref{tab:memory_cost}, OPRD carries a nearly constant additional footprint throughout training: approximately 14\,GiB per GPU relative to GRPO and 13\,GiB relative to OPD and KDRL. 
The two largest identifiable memory requirements within OPRD are the 3.75\,GiB frozen teacher and reference parameter shards and a transient 9.43\,GiB dense corrected-gradient allocation within the correction hook. Of the 3.75\,GiB in frozen-model parameters, 1.87\,GiB is additional relative to OPD and KDRL, which already retain the teacher. The 9.43\,GiB hook allocation is also specific to OPRD. Because the absolute NVML peaks additionally include shared model and optimization state, allocator caches, and CUDA and distributed runtime state, these quantities identify the main OPRD-specific allocations but do not provide an exact additive decomposition of the observed peak difference.
\looseness=-1

The overall maximum occurs during rollout for every method. OPRD reaches 151.27\,GiB, exceeding GRPO by 13.98\,GiB (10.2\%) and OPD and KDRL by 13.21\,GiB. Rollout itself increases memory by approximately 51--52\,GiB for all four methods. The difference is already present before generation, where OPRD begins the measured step at 99.97\,GiB, 14.93\,GiB above GRPO and 13.25\,GiB above OPD and KDRL. OPRD's higher rollout peak therefore results from adding essentially the same generation-time allocation to a higher starting footprint, rather than from rollout requiring more memory.
\looseness=-1

Frozen-policy evaluation is performed within the broader actor-update interval, and their maximum values coincide in our measurements. OPRD reaches 115.19\,GiB during both frozen-policy evaluation and the full update, exceeding OPD and KDRL by 13.15\,GiB. During the update, it also exceeds GRPO by 14.84\,GiB. These differences closely match those observed before and during rollout, indicating that neither frozen-policy evaluation nor the correction introduces a separate phase-specific increase in the device-memory peak. Within the correction hook, PyTorch-allocated memory grows by 9.43\,GiB, matching the largest dense BF16 corrected-gradient tensor. By comparison, the sparse support formed by the sampled action and the student's top-10 tokens occupies at most 1.38\,MiB, and direct gather and sparse scatter avoid an additional 9.27\,GiB response-by-vocabulary copy. The hook allocation is already contained within the 115.19\,GiB update peak, which remains well below the overall maximum during rollout.
\looseness=-1

\begin{table}[!ht]
\caption{
\textbf{Peak GPU memory per training step.} 
We profile 8 prompts with 8 rollouts per prompt and fix every response at 16{,}384 tokens. Each method is evaluated in four independent trials, each launched in a fresh process on four NVIDIA B200 GPUs with DP4/TP1 and BF16. Each trial discards one complete warm-up step and measures the following step. Whole-device NVML memory is sampled every 100\,ms, and each Peak entry reports the mean across trials of the maximum usage over time and across the four GPUs within the indicated interval. All values are in GiB per GPU. Because each Peak entry represents the worst-GPU peak, multiplying it by four provides only a rough upper bound on aggregate device memory.
Overall is the maximum over the complete step. Under Rollout, Peak $-$ Start is the increase from the step-start baseline to the rollout peak. Teacher $+$ Ref.\ Peak reports the maximum during frozen-model evaluation, covering one teacher evaluation for OPD and KDRL and sequential sequential teacher and reference evaluations for OPRD. This evaluation is a subinterval of Actor Update. Params.\ reports the calculated lower bound for the total BF16 parameter shards of the resident frozen models. Actor Update Peak includes resident models, gradients, optimizer states, activations, and runtime buffers, while Hook Growth reports the additional PyTorch allocation during the OPRD correction.
\looseness=-1
}
\label{tab:memory_cost}
\centering

\begingroup
\small
\setlength{\tabcolsep}{2.5pt}
\renewcommand{\arraystretch}{1.15}

\begin{tabularx}{\linewidth}{
@{}
l
>{\hsize=0.95\hsize\linewidth=\hsize\centering\arraybackslash}X
>{\hsize=1.05\hsize\linewidth=\hsize\centering\arraybackslash}X
@{\hspace{5pt}}
>{\hsize=0.95\hsize\linewidth=\hsize\centering\arraybackslash}X
>{\hsize=0.90\hsize\linewidth=\hsize\centering\arraybackslash}X
@{\hspace{5pt}}
>{\hsize=0.95\hsize\linewidth=\hsize\centering\arraybackslash}X
>{\hsize=0.95\hsize\linewidth=\hsize\centering\arraybackslash}X
@{\hspace{5pt}}
>{\hsize=0.95\hsize\linewidth=\hsize\centering\arraybackslash}X
>{\hsize=1.30\hsize\linewidth=\hsize\centering\arraybackslash}X
@{}
}
\toprule

&
\multicolumn{2}{c}{\textbf{Overall}}
&
\multicolumn{2}{c}{\textbf{Rollout}}
&
\multicolumn{2}{c}{\textbf{Teacher $+$ Ref}}
&
\multicolumn{2}{c}{\textbf{Actor Update}} \\
\cmidrule(lr){2-3}
\cmidrule(lr){4-5}
\cmidrule(lr){6-7}
\cmidrule(lr){8-9}

\textbf{Method}
& \mbox{Peak}
& \mbox{Overhead}
& \mbox{Peak}
& \mbox{Peak $-$ Start}
& \mbox{Peak}
& \mbox{Params}
& \mbox{Peak}
& \mbox{Hook Growth} \\
\midrule

\rowcolor{blue!8}
&
\multicolumn{8}{c}{
\textbf{Qwen3-4B (Teacher)}
$\rightarrow$
\textbf{Qwen3-8B (Student)}
} \\
\midrule

GRPO
& $137.29$
& --
& $137.29$
& $52.25$
& --
& --
& $100.35$
& -- \\

OPD
& $138.06$
& $\,\,\,+0.6\%$
& $138.06$
& $51.34$
& $102.04$
& $1.87$
& $102.04$
& -- \\

KDRL
& $138.06$
& $\,\,\,+0.6\%$
& $138.06$
& $51.34$
& $102.04$
& $1.87$
& $102.04$
& -- \\

\rowcolor{gray!15}
OPRD
& $151.27$
& $+10.2\%$
& $151.27$
& $51.30$
& $115.19$
& $3.75$
& $115.19$
& $9.43$ \\

\bottomrule
\end{tabularx}

\endgroup
\end{table}
}

\end{document}